\documentclass[lettersize,journal]{IEEEtran}
\usepackage{amsmath,amsfonts}
\usepackage{algorithmic}
\usepackage{algorithm}
\usepackage{array}
\usepackage{textcomp}
\usepackage{stfloats}
\usepackage{url}
\usepackage{verbatim}
\usepackage{graphicx}
\usepackage{cite}
\usepackage{subfigure}
\usepackage{bm}

\usepackage{amssymb}
\usepackage{hyperref}
\usepackage{amsthm}
\usepackage{todonotes}
\usepackage{mathtools}
 \usepackage{dutchcal}
 \usepackage{stmaryrd}
 \newtheorem{definition}{Definition}
 
 \newtheorem{lemma}{Lemma}
 \def\R{\mathbb{R}}
 \def\Z{\mathbb{Z}}


\begin{document}

\title{Estimating the Coverage Measure and the Area Explored by a Line-Sweep Sensor on the Plane}

\author{Maria Costa Vianna, Eric Goubault, Luc Jaulin, Sylvie Putot~\IEEEmembership{}
\thanks{Maria Costa Vianna, Eric Goubault and Sylvie Putot are with the Computer Science Laboratory of the École Polytechnique (LIX), \'Ecole Polytechnique, Palaiseau, France (e-mail: 
\{costavianna,goubault,putot\}@lix.polytechnique.fr).

Luc Jaulin is with the Lab-STICC, ENSTA Bretagne, Brest, France
(e-mail: luc.jaulin@ensta-bretagne.fr). }
\thanks{
}}


\IEEEpubid{}

\maketitle

\begin{abstract}

This paper presents a method for determining the area explored by a line-sweep sensor during an area-covering mission in a two-dimensional plane. Accurate knowledge of the explored area is crucial for various applications in robotics, such as mapping, surveillance, and coverage optimization. The proposed method leverages the concept of coverage measure of the environment and its relation to the topological degree in the plane, to estimate the extent of the explored region. In addition, we extend the approach to uncertain coverage measure values using interval analysis. This last contribution allows for a guaranteed characterization of the explored area, essential considering the often critical character of area-covering missions. Finally, this paper also proposes a novel algorithm for computing the topological degree in the 2-dimensional plane, for all the points inside an area of interest, which differs from existing solutions that compute the topological degree for single points. The applicability of the method is evaluated through a real-world experiment.

\end{abstract}

\begin{IEEEkeywords}
Plane exploration; topological degree; robotics; interval analysis.
\end{IEEEkeywords}

\section{Introduction}
Mobile robots are increasingly being used to carry out dangerous tasks that otherwise would put human lives at risk, such as bomb disposal, firefighting, and search and rescue missions. Their use in these situations can considerably reduce the risk to human workers while providing more detailed and accurate information about the situation. Additionally, mobile robots can be equipped with specialized tools, such as cameras, grippers, and cutting devices, that enable them to perform a wide range of tasks that would be difficult or impossible for humans to do. In the context of these operations, the robotic platform often needs to perform an area-covering mission. During these missions, a designated part of the robot's environment is thoroughly searched or monitored to develop a complete understanding of the situation or identify potential threats or opportunities.

Determining the area explored by a mobile robot during an area-covering mission is important to establish if the mission is successful. It is also essential for validating path-planning algorithms that will lead to complete coverage  of an area of interest \cite{surveycoverage} or complete avoidance of an area of risk. Overall, determining the explored area is essential for ensuring efficient and safe operations, planning future actions, and gaining valuable insights from the acquired data.

In addition, we are also interested in determining the coverage measure of a point in the environment. The coverage measure represents how many times this point was covered by the robot's sensors or tools, in other words, how many times it was explored. 

Counting the number of times an area was explored is of interest for different reasons, for example, when assessing revisiting missions. In these missions the robot is required to come back to a previous point, therefore to revisit it, to improve the quality of information collected around this point through redundancy. Indeed, studies have shown that target classification improves dramatically when a multi-view approach is adopted. Usually, single-view approaches do not provide enough information to make a confident identification with, for example, Synthetic Aperture Sonars (SAS) \cite{williams} and Synthetic Aperture Radars \cite{ding}. A multi-view method is also essential when recognizing or reconstructing 3-dimensional objects from 2-dimensional data such as camera images \cite{3dcamera}. In these examples, counting how many times a point or an area, as a set of points, has already been explored will be essential to determine the mission completeness. On the contrary, if the robot is not supposed to cover areas previously visited, the coverage measure will be useful for planning optimal paths, reducing unnecessary effort.

In this context, in this work, we present a technique for quantifying the extent of coverage achieved by a mobile robot during a sweep exploration in a two-dimensional environment. Sweep exploration refers to missions where the robot uses a line-sweep sensor. Line-sweep sensors are one-dimensional sensors that provide data along a single axis and must sweep the environment in order to create a two-dimensional representation of its surroundings. With this purpose, we establish a relation between the exploration problem and the topological degree and we demonstrate how it can be used to determine the coverage measure. 

Topological concepts have already been explored for counting \cite{ghristtargetenumeration} and for addressing coverage problems in robotics contexts, e.g.  \cite{ghristcoveragehomology}, \cite{ghristcoveragepersistence}. The main advantage of the approach presented in this paper, is that we determine the number of times an area was explored, with the coverage measure, and different from more common approaches, such as grid-based analysis, our topological method does not require a previous discretization of the environment into fixed cells. We demonstrate that the whole environment can be characterized from very basic information on the robot's state and on the range of visibility of the exploration sensors, resulting in a method of low computational complexity. This approach has already been explored at \cite{oceans}, but here we deepen its mathematical definition and extend it to address previous limitations such as the coverage measure of points on the maximal range of visibility and of points that are swept on the opposite direction of movement.

We also  address the crucial issue of uncertainty in a robot's trajectory to achieve a guaranteed estimation of the explored area. In \cite{desrochers}, a method to estimate the explored area considering the uncertain state of a robot was presented. We extend their method by introducing the concept of uncertain coverage measure.

Our last contribution is an algorithm for computing the winding number of a continuous cycle with respect to all the point in the two-dimensional plane. Algorithms for general topological degree computation have already been proposed by different works \cite{stenger},\cite{franek}. However, methods available in the literature will compute the winding number of a cycle with respect to a single point, needing to be applied to each point individually for a full characterization of the plane. In this context, we present a set-membership approach that efficiently determines the winding number for a whole area of interest. The resulting algorithm and all the concepts defined in this work are applied to determine the area explored by a real autonomous underwater vehicle doing an exploration mission with two line-sweep sensors. 


\section{Problem Statement}
\label{pstatement}
We are interested in the problem of a mobile robot that explores an unknown planar environment. We assume that the robot's pose can be fully described by a function of time: $\bm{x}: \mathbb{R} \rightarrow \mathbb{R}^3$ that is at least $C^2$. The robot's visible area at time $t$ is a subset $\mathbb{V}(t) \subset \mathbb{R}^2$ of the environment that is sensed by the robot's embedded exteroceptive sensors. 

 \begin{figure}[h]
  \centering
  \subfigure[]{\includegraphics[scale=1.]{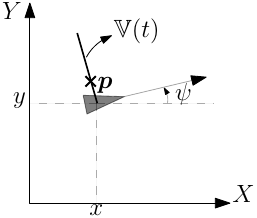}}\quad
  \subfigure[]{\includegraphics[scale=1.]{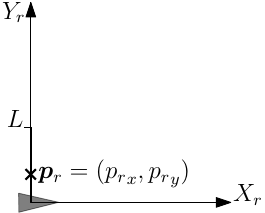}}
  \caption{(a): Mobile robot with a line sweep exploration sensor on the plane. At instant $t$ the point $\bm{p}$ is sensed by the robot ; (b): The point $\bm{p}_r$ is the representation of point $\bm{p}$ in the robot's coordinate frame $X_rY_r$. }
	\label{fig:line_sweep_v}
\end{figure}

 
 We define $\mathbb{V}$ as a set-valued function that depends on the robot's pose and the geometry and technology of the sensors employed. In this work, we focus on the problem of line-sweep exploration sensors and we treat the example of one that osculates the environment on the robot's left side as it moves around the plane, Figure \ref{fig:line_sweep_v}. In this context, the robot's pose at instant $t$ can be represented by the vector $$\bm{x}(t) = \begin{pmatrix} x(t) & y(t) & \psi(t) \end{pmatrix}^T$$ where the pair $(x,y)$ represents the robot's position in the plane and $\psi$ its orientation. Let  $L \in \mathbb{R}^+$ be the sensor's visible range, the visible set in this configuration can be defined as \begin{equation}
     \label{eq:line_sweep_v}
     \mathbb{V}(t) = \{ \bm{p} \in \mathbb{R}^2 | {p_r}_x = 0 \text{ and }  0 \leq {p_r}_y \leq L \}
 \end{equation} where \begin{equation}
    \label{eq:p_rob_frame}
    \bm{p}_r = \begin{pmatrix}
        {p_r}_x & {p_r}_y
    \end{pmatrix}^T = R^{-1}(\psi(t))(\bm{p} - \begin{pmatrix}
        x & y
    \end{pmatrix}^T)
\end{equation} represents in the robot's coordinate frame a point $\bm{p}$ in the environment and $R(\psi(t))$ is the rotation matrix associated with the robot's orientation angle $\psi(t)$.

\begin{figure}[h]
	\centering
	\includegraphics[scale = 0.6]{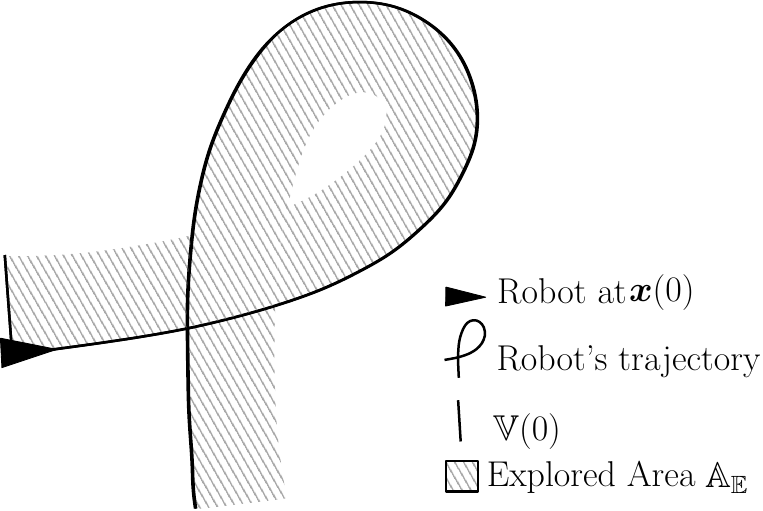}
	\caption{Area explored by a line-sweep sensor on the robot's left side along its trajectory.}
	\label{fig:line_sweep_ae}
\end{figure}

The set $\mathbb{A}_\mathbb{E}$ corresponds to the area explored by the robot during a time interval $[0,T]$, for some maximal value $T > 0$. It can be defined as the union of the robot's visible area along its trajectory \begin{equation}
	\label{eq:explored_area}
	\mathbb{A}_{\mathbb{E}} = \bigcup\limits_{t \in [0,T]} \mathbb{V}(t)
\end{equation}
Figure \ref{fig:line_sweep_ae} shows the resultant $\mathbb{A}_\mathbb{E}$ if we consider the illustrated robot's trajectory and the visible set function described by \eqref{eq:line_sweep_v}.

The robot's visibility region in this case can be parameterized by $u \in U \subseteq \mathbb{R}$. In the considered example $U =  [0,L]$ represents the lateral distance of a point in the visible area to the robot. We can define the sweep function $\bm{f}: U \times [0,T] \rightarrow \mathbb{R}^2$ as a continuously differentiable function whose image over the space $U \times t$, with $t \in [0,T]$, represents the visible area $\mathbb{V}(t)$,
\begin{equation}
	\mathbb{V}(t) = \bm{f}(U,t)
\end{equation}
\begin{figure}[h]
	\centering
	\includegraphics[scale = 0.45]{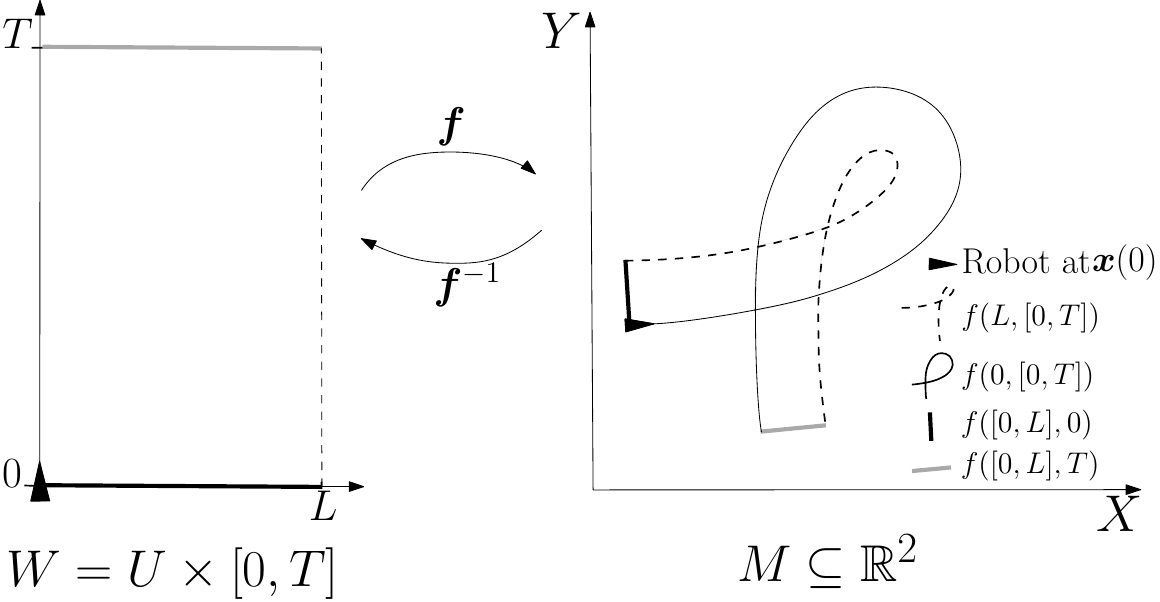}
	\caption{Waterfall and Mosaic Spaces for the line-sweep sensor example.}
	\label{fig:wf_mosaic_onesonar_left}
\end{figure}
By analogy to a common terminology adopted in sonar imagery \cite{waterfall}, we name space $W = U \times [0, T]$ the Waterfall Space. Points in $W$ are of the form $(u,t)$, $u$ representing the parameterization of the visible area, $t$ the time of exploration. All points $(u,t) \in W$ are points that were in the robot's visible area at least once and therefore, points that were explored during the mission. The robot's pose $\bm{x}$, its visible area $\mathbb{V}$ and $\mathbb{A}_{\mathbb{E}}$ are all defined inside an absolute coordinate system, the Mosaic Space $M \subseteq \mathbb{R}^2$ or the World Frame, as it is usually called in robotics. The sweep function $\bm{f}$ maps points from the Waterfall to the Mosaic space, Figure \ref{fig:wf_mosaic_onesonar_left}.

\begin{figure}[h]
	\centering
	\includegraphics[scale = 0.9]{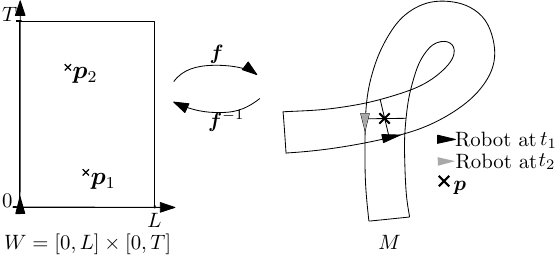}
	\caption{ Point $\bm{p}$ is revisited once during the mission and $\bm{f}^{-1}(\bm{p}) = \{ \bm{p}_1, \bm{p}_2\}$.}
	\label{fig:cm_onesonar_left}
\end{figure}

    The coverage measure, or how many times a point in the environment was explored by the robot during a mission, is given by the function $c_m: M \rightarrow \mathbb{N}_0$. A point is considered to be revisited if once in the robot's visibility range, it goes out of reach and then is sensed again later in time. In Figure \ref{fig:cm_onesonar_left}, for example, point $\bm{p}$ is sensed for the first time at instant $t_1$ and revisited at instant $t_2$, in this case, $c_m(\bm{p}) = 2$. 
    
    Let $det$ be the determinant function and $J_{\bm{f}}$ represents the Jacobian matrix of the sweep function. We adopt the following condition: \begin{equation}
    \label{eq:det}
        \forall \bm{w} \in W , det(J_{\bm{f}}(\bm{w})) > 0
    \end{equation}
    that implies that the robot is constantly moving and that the sensor sweeps the environment on the same direction of its advancement movement. By assuming this condition is met, we can say that the number of times that a point appears in the waterfall space corresponds to the number of times that this point was explored during a mission. If $Ker \ \bm{f}$ is the kernel of function $\bm{f}$, considering the definitions stated in this Section: for $\bm{p} \in M$, it can be concluded that \begin{equation}
    	\label{eq:def_cm}
    	c_m(\bm{p})  = \# Ker \ (\bm{f} - \bm{p})
    \end{equation}


The explored area $\mathbb{A}_{\mathbb{E}}$ can be characterized as the set of points that were sensed by the robot at least once and therefore in terms of the coverage measure of its points: 
\begin{equation}
	\label{eq:def_ae_cm}
	\mathbb{A}_{\mathbb{E}} = \{ \bm{p} \in M | c_m(\bm{p}) \geq 1\} 
\end{equation}

Describing the mosaic space using the coverage measure of its points is the method adopted in this work for defining the explored area. To achieve this, the following section establishes a connection between the topological degree and the coverage measure and this relation is explored with this purpose.

\section{Coverage Measure and Topological Degree}
In \cite{oceans} a relation between the coverage measure of a point in the plane and the topological degree has been explored. 
Here we give a general axiomatic definition of the notion of topological degree and recap the main properties that we use. 

\begin{definition}[Topological degree]
	\label{def:deg}
Let $D$ be an open subset of $\R^n$ and $\bm{f}$ a continuous function from its closure $\overline{D}$ to $\R^n$. A degree of $\bm{f}$ is a family of functions $deg: \ (\bm{f},D,\bm{p}) \rightarrow \Z$ for all $D$ open subsets of $\R^n$, $\bm{f}$ continuous and $\bm{p} \in \R^n\backslash \bm{f}(\partial D)$ such that: 
\begin{itemize}
    \item (identity) $deg(Id_{D},D,\bm{p})=1$ if $\bm{p} \in D$
    \item (excision) $deg(\bm{f},D,\bm{p})=deg(\bm{f},D_1,\bm{p})+deg(\bm{f},D_2,\bm{p})$ where $D_1$, $D_2$ are opens in $D$ with $\bm{p} \not \in \bm{f}(\overline{D}\backslash (D_1 \cup D_2))$
     \item (homotopy invariance) $deg(\bm{h}(\alpha,.),D,\bm{p}(\alpha))$ is independent of $\alpha$ for any homotopy $\bm{h}: \ [0,1] \times \overline{D} \rightarrow \R^n$, and $\bm{p}(\alpha)\not \in \bm{h}(\alpha,\partial D)$ for all $\alpha \in [0,1]$.
\end{itemize}
\end{definition}

When such a family of function exists, it is known to be unique \cite{mappingdegree}. In particular, when $\bm{f}$ is at least continuously differentiable, and $\bm{p}$ is a regular value of $\bm{f}$ (i.e. the determinant of the Jacobian of $\bm{f}$, $det(J_{\bm{f}})$, is non zero on each $\bm{d}$ with $\bm{f}(\bm{d})=\bm{p}$): 
\begin{equation}  \label{eq:topdeg_def}  deg(\bm{f},D,\bm{p})=\sum\limits_{\bm{d} \in \bm{f}^{-1}(\bm{p})} sign(det(J_{\bm{f}}(\bm{d})))\end{equation}

As well known in complex analysis, the topological degree of differentiable functions from the unit ball $D^2$ in $\R^2$ to $\R^2$ is linked to the winding number of $\bm{f}(\partial D^2)$. We are going to take the homological view on winding numbers in this paper. Let $S^1=\partial D^2$ be the 1-sphere, $\bm{p}$ a point in the interior of the image by $\bm{f}$ of $D^2$. Function $\bm{f}$ maps $S^1$, on a cycle in $\R^2$, and the winding number is the number of times this cycle turns around $\bm{p}$. By convention, counterclockwise turns count positively and clockwise turns negatively.  

\begin{definition}[Winding number]
\label{def:wn}
Let $\bm{f}: \ D^2 \rightarrow \mathbb{R}^2$ be a continuous function and $\bm{p} \in \bm{f}(D^2) \backslash \bm{f}(S^1)$. Consider its restriction $\bm{f}_{\mid S^1}: \ S^1 \rightarrow \mathbb{R}^2\backslash \{\bm{p}\}$. It induces a linear map in homology: 
$$\tilde{\bm{f}}: \ H_1(S^1) \rightarrow H_1(\mathbb{R}^2 \backslash \{\bm{p}\})$$
\noindent i.e. from $\mathbb{Z}$ to $\mathbb{Z}$, i.e. is of the form $\tilde{\bm{f}}(C)=\eta C$, where $C$ represents an equivalence class in $H_1(S^1)$. This $\eta$ is called the winding number of $\gamma=\bm{f}(S^1)$ around point $\bm{p} \in \bm{f}(D^2) \backslash \bm{f}(S^1)$. For all other points in $\mathbb{R}^2 \backslash \partial D^2$ the winding number is set to zero. 
\end{definition}

We can now state the relation between the topological degree and the winding number: 
\begin{lemma}
Let $\bm{f}$ be a continuously differentiable map from $D^2$ to $\mathbb{R}^2$ and let $\bm{y} \in \mathbb{R}^2 \backslash  \bm{f}(\partial D^2)$ such that $\bm{f}^{-1}(\bm{y})$ is finite and $\bm{y}$ is a regular point for $\bm{f}$. Then $deg(\bm{f},D^2,\bm{y})$ is equal to the winding number $\eta(\bm{f}(\partial D^2),\bm{y})$ of $\bm{f}(\partial D^2)$ at $\bm{y}$.
\end{lemma}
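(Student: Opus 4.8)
\emph{Proof plan.} The plan is to show that both sides of the asserted equality coincide with the signed preimage count $\sum_{\bm{d}\in\bm{f}^{-1}(\bm{y})} sign(det(J_{\bm{f}}(\bm{d})))$. First I would dispose of the case $\bm{y}\notin\bm{f}(D^2)$: then $\bm{f}^{-1}(\bm{y})=\emptyset$, the restriction $\bm{f}_{\mid D^2}$ exhibits $\bm{f}_{\mid S^1}$ as null-homotopic in $\R^2\backslash\{\bm{y}\}$ so $\eta(\bm{f}(\partial D^2),\bm{y})=0$, while $deg(\bm{f},D^2,\bm{y})=0$ is the empty sum in \eqref{eq:topdeg_def}; hence one may assume $\bm{y}\in\bm{f}(D^2)$. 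Since $\bm{y}$ is a regular value and $\overline{D^2}$ is compact, $\bm{f}^{-1}(\bm{y})=\{\bm{d}_1,\dots,\bm{d}_k\}$ is finite, and \eqref{eq:topdeg_def} already identifies $deg(\bm{f},D^2,\bm{y})$ with $\sum_{i=1}^k sign(det(J_{\bm{f}}(\bm{d}_i)))$ (alternatively one re-derives this from the excision and homotopy axioms of Definition~\ref{def:deg} by localizing at each $\bm{d}_i$ and homotoping $\bm{f}$ to its linearization there). The remaining task, which is the real content of the lemma, is to show the winding number equals this same sum.

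\emph{Localizing the winding number.} The next step would be to cut out the preimages. By the inverse function theorem each $\bm{d}_i$ has an open disk $B_i\subset D^2$, with the closures $\overline{B_i}$ pairwise disjoint, on which $\bm{f}$ is a diffeomorphism onto a neighborhood $N_i\ni\bm{y}$; after shrinking, the $N_i$ are pairwise disjoint and disjoint from $\bm{f}(\partial D^2)$. Setting $D'=D^2\backslash\bigcup_i\overline{B_i}$ one has $\bm{f}(D')\subset\R^2\backslash\{\bm{y}\}$, and in $H_1(D')$ the outer circle is homologous to the sum of the small ones, $[\partial D^2]=\sum_i[\partial B_i]$, since it encircles all the holes. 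Applying $\bm{f}_*$ and identifying $H_1(\R^2\backslash\{\bm{y}\})$ with $\Z$ therefore gives $\eta(\bm{f}(\partial D^2),\bm{y})=\sum_{i=1}^k\eta(\bm{f}(\partial B_i),\bm{y})$.

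\emph{Evaluating each local contribution.} Then I would compute $\eta(\bm{f}(\partial B_i),\bm{y})$. On $B_i$ the diffeomorphism $\bm{f}$ is orientation-preserving or orientation-reversing according to the sign of $det(J_{\bm{f}}(\bm{d}_i))$, and a straight-line homotopy inside $N_i\backslash\{\bm{y}\}$ deforms $\bm{f}_{\mid\partial B_i}$ to the loop $\theta\mapsto\bm{y}+\varepsilon\,J_{\bm{f}}(\bm{d}_i)(\cos\theta,\sin\theta)^T$ for small $\varepsilon>0$; its winding number about $\bm{y}$ is $+1$ when $det(J_{\bm{f}}(\bm{d}_i))>0$ and $-1$ when $det(J_{\bm{f}}(\bm{d}_i))<0$, using that $GL^+(2,\R)$ is path-connected to the identity and that an orientation-reversing isomorphism turns the positively oriented circle into a clockwise loop. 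Hence $\eta(\bm{f}(\partial B_i),\bm{y})=sign(det(J_{\bm{f}}(\bm{d}_i)))$, and summing over $i$ yields $\eta(\bm{f}(\partial D^2),\bm{y})=\sum_{i=1}^k sign(det(J_{\bm{f}}(\bm{d}_i)))=deg(\bm{f},D^2,\bm{y})$.

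\emph{Expected obstacle.} The hard part will not be any single estimate but the orientation bookkeeping of the two localizations: choosing the $B_i$ small enough that $\bm{f}_{\mid\overline{B_i}}$ is genuinely injective with tightly controlled image, keeping consistent orientations on all the small circles, and making the homotopy from $\bm{f}_{\mid\partial B_i}$ to the linear model rigorous inside the punctured plane. A shorter but less self-contained alternative would be to invoke the homological description of the degree, namely the integer by which $\bm{f}_*\colon H_2(D^2,S^1)\to H_2(\R^2,\R^2\backslash\{\bm{y}\})$ acts, both groups being isomorphic to $\Z$, and then use naturality of the connecting homomorphism of the pair, which identifies these relative groups with $H_1(S^1)$ and $H_1(\R^2\backslash\{\bm{y}\})$ and turns the statement into the commuting square $deg=\eta$; the price there is first checking, via the uniqueness cited after Definition~\ref{def:deg}, that the axiomatic degree and the homological one coincide.
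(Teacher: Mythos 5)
Your proposal is correct and follows essentially the same route as the paper: excise small neighborhoods of the finitely many regular preimages on which $\bm{f}$ is a homeomorphism (diffeomorphism), use that $\partial D^2$ is homologous to the sum of the small boundary circles in the complement of those neighborhoods to split the winding number into local contributions, and identify each local contribution with $sign(det(J_{\bm{f}}(\bm{d}_i)))$. The only cosmetic difference is in that last step, where you linearize via a straight-line homotopy and path-connectedness of $GL^+(2,\R)$ while the paper argues directly with the orientation of the image Jordan curve; both are standard and equivalent here.
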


\begin{proof}
For all $\bm{y} \in {\mathbb{R}^2 \backslash \bm{f}(\partial {D^2})}$, either there exists no $\bm{d}$ such that $\bm{y}=f(\bm{d})$, or there exists a finite, non-zero number of $\bm{d}$, $\bm{d}_1,\ldots,\bm{d}_m$ in $D^2$, such that $\bm{f}(\bm{d}_i)=\bm{y}$.

In the first case, this means that both, $deg(\bm{f},D^2,\bm{y})$ is zero and $\bm{y}$ is in the complement of $\bm{f}(D^2)$ and the winding number $\eta(\bm{f}(\partial D^2),\bm{y})$ is also zero. 

In the second case, $\bm{y}$ being regular for $\bm{f}$, we have 
$deg(\bm{f},D,\bm{y})=\sum\limits_{i=1}^{m} sign(det(J_{\bm{f}}(\bm{d}_i)))$. Take small enough open neighborhoods $U_i$ of $\bm{d}_i$ in $D$ such that the sign of $det(J_{\bm{f}}(\bm{d}))$ is the same as the sign of $det(J_{\bm{f}}(\bm{d}_i))$ for all $\bm{d} \in U_i$. This is always possible since $J_{\bm{f}}$ is continuous. Note that this implies that $\bm{f}$ restricted to $U_i$ induces an homeomorphism onto its image. Also we can always choose the $U_i$ to have empty pairwise intersections and to have $\bm{f}$ being an homeomorphism from $\overline{U_i}$ onto its image, by taking them small enough (the $\bm{d}_i$ are isolated points within $D$). 

Now, the map $\tilde{\bm{f}}$ is the same as the map induced in homology $\tilde{\bm{f}}$ by $\bm{f}: \ D^2\backslash \bigcup\limits_{i=1}^m U_i \rightarrow \mathbb{R}^2 \backslash \{\bm{y}\}$. We note also that within $D^2\backslash\bigcup\limits_{i=1}^m U_i$, the cycle $\partial D^2$ is homologous to the sum of the $\partial(U_i)$, for $i=1,\ldots,m$. Hence $\tilde{\bm{f}}(\partial D^2)=\sum\limits_{i=1}^m \tilde{\bm{f}}(\partial(U_i))$. 

But $\bm{f}(\partial(U_i))$ is a Jordan curve homeomorphic (by $\bm{f}$) to $\partial(U_i)$, since we chose $U_i$ such that $\bm{f}$ restricted to $\overline{U_i}$ onto its image is a homeomorphism. Hence $\tilde{\bm{f}}(\partial U_i )$ is either plus or minus identity, according to the orientation of $\tilde{\bm{f}}(\partial U_i )$, i.e. $\tilde{\bm{f}}(\partial U_i)=sign(det(J_{\bm{f}}(\bm{d})))$ for any $\bm{d} \in U_i$, which we know is equal to $sign(det(J_{\bm{f}}(\bm{d}_i))$. Hence

$$\eta(\bm{f}(\partial D^2),\bm{y})=\sum\limits_{i=1}^m sign(det(J_{\bm{f}}(\bm{d}_i)))=deg(\bm{f},D^2,\bm{y})$$. 



\end{proof}

 \begin{figure}[h]
	\centering
	\includegraphics[scale = 0.5]{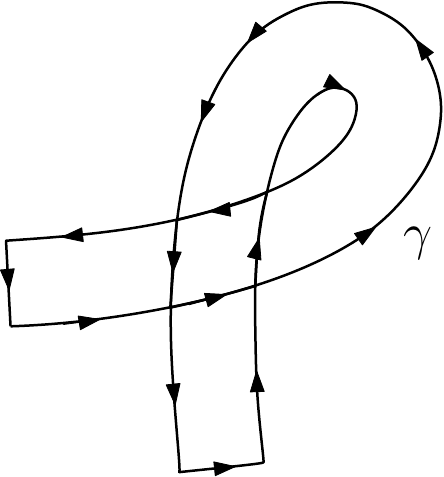}
	\caption{The sensor’s contour $\gamma$ for the mission represented in Figure \ref{fig:line_sweep_ae}.}
	\label{fig:contour}
\end{figure}

Now let $\bm{f}$ represent the sweep function, mapping from the Waterfall Space $W$, which is homeomorphic to $D^2$, to the Mosaic Space $M$. According to \eqref{eq:topdeg_def} and under hypothesis \eqref{eq:det}, for $\bm{p} \in \R^n\backslash \bm{f}(\partial W)$, \begin{equation}  deg(\bm{f},W,\bm{p}) =  \sum\limits_{\bm{w} \in \bm{f}^{-1}(\bm{p})} +1 = \# Ker \ (\bm{f} - \bm{p}) \end{equation} Finally, from \eqref{eq:def_cm}, it can be concluded that $deg(\bm{f},W,\bm{p}) = c_m(\bm{p})$. Moreover, from Definition \ref{def:wn}, \begin{equation}
	\label{eq:oceans_prop}
	\eta(\gamma,\bm{p}) =  c_m(\bm{p}),
\end{equation}
where $\gamma = \bm{f}(\partial W)$ represents the sensor's contour, a counterclockwise oriented closed curve that surrounds all the points that have been explored, Figure \ref{fig:contour}, and $\eta(\gamma,\bm{p})$ is its winding number with respect to $\bm{p}$.

Throughout the remainder of this Section, we extend the relation between the coverage measure and the topological degree so it comprehends more general scenarios.

 \subsection{ Coverage Measure for Points with Undefined Winding Numbers}
When the robot's pose and its visible set are well defined, the coverage measure of all the points in the environment during a mission can be uniquely determined. However, if we adopt the method proposed by \cite{oceans}, using relation  \eqref{eq:oceans_prop}, the coverage measure of a point $\bm{p} \in \gamma$ will be undefined considering the definition of winding numbers. 

\begin{figure}[h]
	\centering
	\includegraphics[scale=1.]{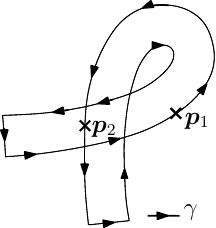}
	\caption{The coverage measure of point $\bm{p}_1$ is equal to $1$ and of point $\bm{p}_2$ is equal to $2$, but the winding number of $\gamma$ with respect to these points is undefined.}
	\label{fig:undef_wn}
\end{figure}

For example, in Figure \ref{fig:undef_wn}, point $\bm{p}_1 \in \gamma$ is the image by $\bm{f}$ of a point $(0,t) \in W$, for some $t \in [0,T]$. This point is inside the robot's visible area $\mathbb{V}(t)$ and according to the definition of the coverage measure on \eqref{eq:def_cm}, $c_m(\bm{p}_1) = 1$ even if $\eta(\gamma,\bm{p}_1)$ is undefined. In this context, to extend the validity of \eqref{eq:oceans_prop}, we define a bounded function $\overline{\eta}$ as the extension of the winding number function to the full domain $\bm{f}(W)$. For that, we consider the followingadapted from \cite{upper-semi}: 

\begin{definition}[Limit Superior]
 \label{def:uppersemicont}
Let $M$ be a metric space and $g$ a function from $M$ to $\mathbb{R}$. For any limit point $\bm{y} \in M$ the limit superior,  when it exists, is defined as:
$$\mathop{limsup}\limits_{\bm{p} \rightarrow \bm{y}} g(\bm{p}) = \lim\limits_{\epsilon\rightarrow 0} \ (sup  \{g(\bm{p}) \ | \ \bm{p} \in B(\bm{y},\epsilon)\backslash \{\bm{y}\}\})$$
\noindent where $B(\bm{y},\epsilon)$ denotes the ball within $M$, centered at $\bm{y}$, of radius $\epsilon$. 
\end{definition}

The sweep function $\bm{f}$ is a continuous map from a compact subset $W$ to $\mathbb{R}^2$, therefore $\bm{f}(W)\backslash \bm{f}(\partial W)$ is composed of a disjoint union of opens $V_i$, $i \in I$, for some index set $I$. All points of $\bm{f}(\partial W)$ are limits of some sequence of points $\bm{f}(\bm{y})$, with $\bm{y} \in \mathring{W}$. We can now state: 

\begin{lemma}
\label{lem:uppersemicont}
Consider a function $w: \ \mathop{\bigcup}\limits_{i\in I} V_i \rightarrow \Z$. Suppose that $w$ is bounded on $\mathop{\bigcup}\limits_{i\in I} V_i$ then there is an upper semi-continuous extension of $w$, $\overline{w}: \ \bm{f}(W) \rightarrow \Z$ defined as: 
$$
\overline{w}(\bm{p})=\left\{\begin{array}{ll}
w(\bm{p}) & \mbox{if $\bm{p}\in \mathop{\bigcup}\limits_{i\in I} V_i$} \\
\mathop{limsup}\limits_{\bm{p}' \in  \mathop{\bigcup}\limits_{i\in I} V_i \rightarrow \bm{p}} w(\bm{p}') & \mbox{ \text{otherwise}}
\end{array}\right.
$$
\end{lemma}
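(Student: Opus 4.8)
The plan is to verify the two defining properties of an upper semi-continuous function for the extension $\overline{w}$: that it agrees with $w$ on the open set $\bigcup_{i\in I} V_i$ (which is immediate from the definition), and that for every point $\bm{p}\in \bm{f}(W)$ we have $\limsup_{\bm{p}'\to\bm{p}} \overline{w}(\bm{p}') \le \overline{w}(\bm{p})$. First I would observe that since $w$ is integer-valued and bounded on $\bigcup_{i\in I} V_i$, it takes only finitely many values there, so the supremum in Definition~\ref{def:uppersemicont} is attained and the $\limsup$ in the definition of $\overline{w}$ is a well-defined integer; in particular $\overline{w}$ is itself bounded and integer-valued on all of $\bm{f}(W)$, which is needed for the codomain to be $\Z$ as claimed. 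This also shows the ``otherwise'' branch is only invoked at limit points, which by the remark preceding the lemma is exactly the situation on $\bm{f}(\partial W)$, so $\overline{w}$ is genuinely defined on all of $\bm{f}(W)$.

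Next I would split the semi-continuity inequality into two cases according to where $\bm{p}$ lies. If $\bm{p}\in \bigcup_{i\in I} V_i$, this set is open, so on a small enough ball $B(\bm{p},\epsilon)$ contained in it $\overline{w}$ coincides with $w$; since $w$ is locally constant on each $V_i$ (being an integer-valued function that is, in our intended application, a winding number — but in fact we only need it to be continuous, or we may simply argue with the $\limsup$ directly), we get $\limsup_{\bm{p}'\to\bm{p}}\overline{w}(\bm{p}') = \overline{w}(\bm{p})$, which certainly gives $\le$. Actually, to avoid assuming local constancy, the cleaner route is: for $\bm{p}\in\bigcup_i V_i$ the value $\overline{w}(\bm{p})=w(\bm{p})$ and we must show nearby values do not exceed it; if $w$ is merely bounded this need not hold, so here I would note that the lemma as applied needs $w$ at least locally constant near interior points, or restrict the claim appropriately — I expect the intended reading is that $w$ is the winding number, hence locally constant on each $V_i$, making this case trivial.

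The substantive case is $\bm{p}\in \bm{f}(\partial W)$ (equivalently, a point not in $\bigcup_i V_i$). Here $\overline{w}(\bm{p}) = \limsup_{\bm{p}'\in\bigcup_i V_i \to \bm{p}} w(\bm{p}')$ by definition. I would take an arbitrary sequence $\bm{q}_n \to \bm{p}$ in $\bm{f}(W)$ with $\overline{w}(\bm{q}_n)$ converging to $\limsup_{\bm{q}\to\bm{p}}\overline{w}(\bm{q})$, and for each $n$ choose, using the definition of $\overline{w}(\bm{q}_n)$ as a $\limsup$ (or directly as a value of $w$ if $\bm{q}_n\in\bigcup_i V_i$), a point $\bm{q}_n'\in\bigcup_i V_i$ within distance $1/n$ of $\bm{q}_n$ with $w(\bm{q}_n') \ge \overline{w}(\bm{q}_n) - 1/n$; this is possible because of how the $\limsup$ is defined as a decreasing limit of suprema. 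Then $\bm{q}_n'\to\bm{p}$ as well, so $\limsup_n w(\bm{q}_n') \le \limsup_{\bm{p}'\in\bigcup_i V_i \to \bm{p}} w(\bm{p}') = \overline{w}(\bm{p})$, and combining the inequalities gives $\limsup_{\bm{q}\to\bm{p}}\overline{w}(\bm{q}) \le \overline{w}(\bm{p})$, as required.

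The main obstacle I anticipate is purely bookkeeping: carefully handling the nested $\limsup$ (a $\limsup$ of values that are themselves $\limsup$s) and making sure the diagonal/extraction argument is valid, which is where the integer-valued and bounded hypotheses do real work — they guarantee all these suprema are attained and finite, so there is no issue with $\pm\infty$ or with suprema failing to be realized by actual points of $\bigcup_i V_i$. A secondary subtlety worth stating explicitly is that every point of $\bm{f}(\partial W)$ really is approached by points of $\bigcup_i V_i$ (so the $\limsup$ defining $\overline{w}$ ranges over a nonempty family); this is exactly the sentence in the excerpt asserting that all points of $\bm{f}(\partial W)$ are limits of sequences $\bm{f}(\bm{y})$ with $\bm{y}\in\mathring{W}$, which I would invoke directly.
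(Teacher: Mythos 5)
Your proposal is correct and follows essentially the same route as the paper, whose proof simply declares the verification immediate (boundedness guarantees the limit superior exists, and the defining formula forces upper semi-continuity); you just write out in full the sequential check at points of $\bm{f}(\partial W)$, including the diagonal extraction the paper leaves unstated. The caveat you flag about interior points is fair but harmless: upper semi-continuity on $\bigcup_{i\in I} V_i$ does require $w$ itself to be upper semi-continuous there, which the paper leaves implicit and which holds in the intended application since the winding number is constant on each component $V_i$.
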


\begin{proof}
This is immediate: the limit sup exists since $w$ is bounded on $\bigcup\limits_{i\in I} V_i$, and the definition of $\overline{w}$ precisely imposes that $\overline{w}$ is upper semi-continuous. 
\end{proof}

Supposing that the number of connected components of $\bm{f}(W)\backslash \bm{f}(\partial W)$ is finite, as the winding number is constant on each component, this defines a bounded function $\eta$ that we can extend to the full domain $\bm{f}(W)$ by Lemma \ref{lem:uppersemicont} to obtain $\overline{\eta}$. Finally, if the condition expressed in \eqref{eq:det} is satisfied, we can say that for any $\bm{p} \in M$, 

\begin{equation}
	\label{eq:newdef_cm}
	\overline{\eta}(\gamma,\bm{p}) =  c_m(\bm{p})
\end{equation}

Considering Definition \ref{def:uppersemicont}, if $\bm{p} \in \gamma$, its coverage measure will be equal to the coverage measure of points on the open $V_i$ with the biggest winding number value for which $\bm{p}$ is a limit, as expected by the original definition on \eqref{eq:def_cm}. 

This new definition extends the applicability of the method but condition \eqref{eq:det} is still necessary for \eqref{eq:newdef_cm} to be true. Next section introduces new concepts to remove this constraint. 

\subsection{Coverage Measure for Points Swept Backwards}
Condition \eqref{eq:det} is necessary for \eqref{eq:newdef_cm} to be true. It ensures that the area surrounded by the sensor's contour $\gamma$ never shrinks during a mission and that $\gamma$ is indeed an enclosing curve for $\mathbb{A}_{\mathbb{E}}$. 

\begin{figure}[h]
		\centering
		\includegraphics[scale = 0.7]{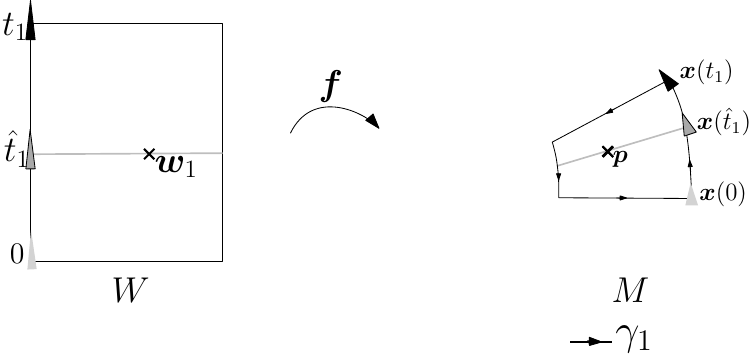}
		\caption{Mission during time interval $[0,t_1]$, point $\bm{p}$ is sensed for the first time at $\hat{t}_1$ and $c_m(\bm{p}) = 1$.}
		\label{fig:sweep1}
\end{figure}

\begin{figure}[h]
		\centering
		\includegraphics[scale = 0.7]{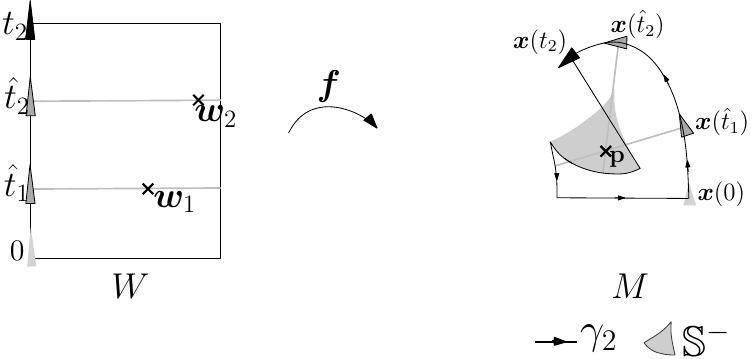}
		\caption{Condition established in Equation \eqref{eq:det} is not satisfied for all the points in $W$. At $t_2$, $c_m(\bm{p}) = 2$.}
		\label{fig:sweep2}
\end{figure}

\begin{figure}[h]
		\centering
		\includegraphics[scale = 0.7]{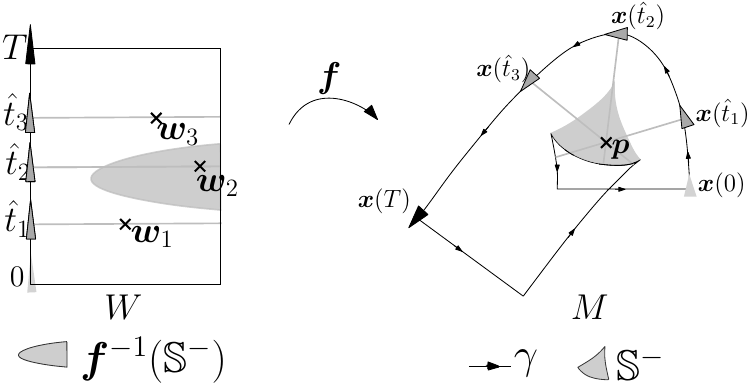}
		\caption{The mission ends at $T$ and the point $\bm{p}$ is sensed for the last time at $\hat{t}_3$, the final coverage measure of this point is $3$ although $\eta(\gamma,\bm{p}) = 1$.}
		\label{fig:sweep3}
\end{figure}

If condition \eqref{eq:det} is not satisfied, the inconsistency in the equality \eqref{eq:newdef_cm} is illustrated in Figures \ref{fig:sweep1},\ref{fig:sweep2} and \ref{fig:sweep3}. At the beginning of the mission, in Figure \ref{fig:sweep1}, the robot moves from its initial state $\bm{x}(0)$ to state $\bm{x}(t_1)$, $t_1 > 0$. During the interval $[0,t_1]$, condition \eqref{eq:det} is satisfied. Point $\bm{p} \in M$ is sensed for the first time at instant $\hat{t}_1 \in [0,t_1]$ and this occurrence is represented in the mission's Waterfall Space $W$ by point $\bm{w}_1$. The sensor's contour associated with this first part of the mission is the closed curve $\gamma_1 = \bm{f}(\partial([0, L] \times [0,t_1]))$ and $\eta(\gamma_1,\bm{p}) = sign(det(J_{\bm{f}}(\bm{w}_1))) = 1$ is indeed equal to the coverage measure of $\bm{p}$ at $t_1$. 

The mission continues as the robot advances to state $\bm{x}(t_2)$, $t_2 > t_1$ and point $\bm{p}$ is revisited at $\hat{t}_2$. For the time interval $[0,t_2]$, we have $\bm{f}^{-1}(\bm{p}) = \{ \bm{w}_1, \bm{w}_2\}$ and $\gamma_2 = \bm{f}(\partial([0,L] \times [0,t_2]))$ represents the sensor's contour. As illustrated in Figure \ref{fig:sweep2}, at $\hat{t}_2$, point $\bm{p}$ is swept in the opposite direction with respect to the robot's advancement movement. In this context, the Jacobian of function $\bm{f}$ at $\bm{w}_2$ is negative and $$\begin{aligned} \eta(\gamma_2,\bm{p}) & =  \sum\limits_{i=1}^2 sign(det(J_{\bm{f}}(\bm{w}_i)))  = 1 -1 = 0 \end{aligned}$$ although, according to \eqref{eq:def_cm}, $c_m(\bm{p}) = 2$ at $t_2$.

Exploration ends at state $\bm{x}(T)$, $T > t_2$ and the complete mission is represented in Figure \ref{fig:sweep3}. Point $\bm{p}$ is sensed for the third and last time at $\hat{t}_3$ and at the end of the mission $\bm{f}^{-1}(\bm{p}) = \{ \bm{w}_1, \bm{w}_2, \bm{w}_3\}$. At $\hat{t}_3$, point $\bm{p}$ is sensed by a forward movement of the sensor on the plane, therefore,  $$\begin{aligned} \eta(\gamma,\bm{p}) & =  \sum\limits_{i=1}^3 sign(det(J_{\bm{f}}(\bm{w}_i))) = 1 -1 + 1 = 1 \end{aligned}$$ but $c_m(\bm{p}) = 3$ is expected.

\begin{figure}[h]
	\centering
	\includegraphics[scale = 0.5]{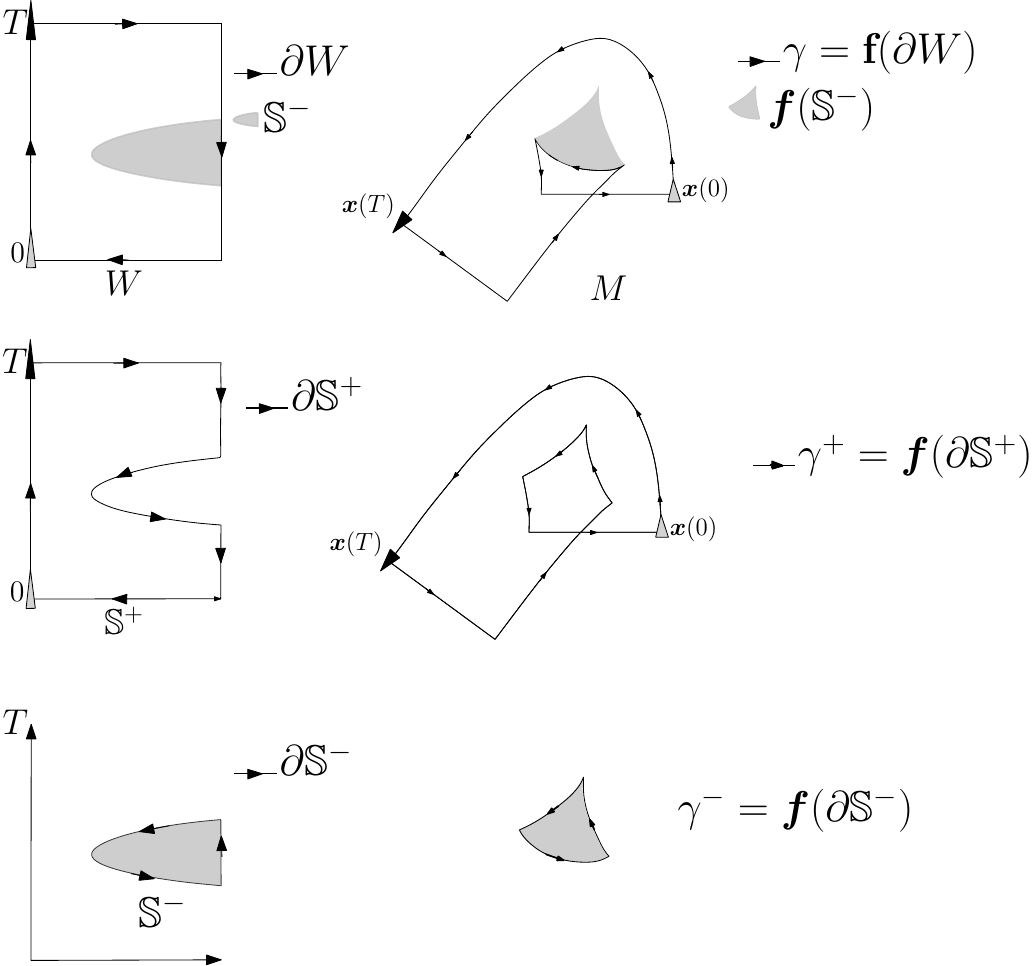}
	\caption{Decomposition of the Waterfall Space and $\gamma$ according to the sweeping direction.} 
	\label{fig:new_gammas}
\end{figure}

To address this problem, we can divide the Waterfall Space $W$ into two sets, $\mathbb{S}^+$ and  $\mathbb{S}^-$,  \begin{equation}  \label{eq:s_plus} \mathbb{S}^+ = \{ \bm{y} \in W |  det(J_{\bm{f}}(\bm{y})) > 0) \} \end{equation}  \begin{equation} \label{eq:s_minus}  \mathbb{S}^- =  \{ \bm{y}  \in W |  det(J_{\bm{f}}(\bm{y})) < 0) \}
\end{equation} We define two new positively oriented contours, $\gamma^+$ and $\gamma^-$ as the image by $\bm{f}$ of the boundaries of these sets, as illustrated in Figure \ref{fig:new_gammas}, 

\begin{equation}
	\label{eq:gamma_plus}
    \gamma^+ = \bm{f}(\partial \mathbb{S}^+)
\end{equation} \begin{equation}
    \gamma^- =  \bm{f}(\partial \mathbb{S}^-)
    \label{eq:gamma_minus}
    \end{equation}



For a regular value $\bm{p} \in M$ we will have $Ker \ (\bm{f} - \bm{p})  \subset \mathbb{S}^+\cup\mathbb{S}^-$, furthermore we can say that   \begin{equation}Ker \ (\bm{f} - \bm{p}) = Ker (\bm{f} - \bm{p})_{|\mathbb{S}^+} \cup Ker (\bm{f} - \bm{p})_{|\mathbb{S}^-} \end{equation} and we can rearrange \eqref{eq:def_cm}: \begin{equation}
  c_m(\bm{p}) =  \# Ker \ (\bm{f} - \bm{p})_{\mid \mathbb{S}^+} + \# Ker \ (\bm{f} - \bm{p})_{\mid \mathbb{S}^-} \end{equation}  
  \begin{equation}
	c_m(\bm{p}) =  \sum\limits_{\bm{w} \in {\bm{f}}_{\mid \mathbb{S}^+}^{-1}(\bm{p})} +1 \hspace{10pt} +  \sum\limits_{\bm{w} \in {\bm{f}}_{\mid \mathbb{S}^-}^{-1}(\bm{p})}+1 
  \end{equation}
  Considering the definitions of sets $\mathbb{S}^+$ and $\mathbb{S}^-$ on \eqref{eq:s_plus} and \eqref{eq:s_minus}, respectively,
 \small \begin{equation} {
  c_m(\bm{p}) =  \hspace{-10pt} \sum\limits_{\bm{w} \in {\bm{f}}_{\mid \mathbb{S}^+}^{-1}(\bm{p})} \hspace{-10pt}sign(det(J_{\bm{f}})(\bm{w}))  - \hspace{-15pt} \sum\limits_{\bm{w} \in {\bm{f}}_{\mid \mathbb{S}^-}^{-1}(\bm{p})} \hspace{-10pt} sign(det(J_{\bm{f}})(\bm{w}))  }\end{equation}\normalsize

Finally, considering Equations \eqref{eq:gamma_plus} and \eqref{eq:gamma_minus}, from \eqref{eq:topdeg_def} and Definition \ref{def:wn}, we obtain \begin{equation}
    \label{eq:last_cm_def}
    c_m(\bm{p}) = \overline{\eta}(\gamma^+,\bm{p}) + \overline{\eta}(\gamma^-,\bm{p})
\end{equation} for any regular point $\bm{p} \in M$. The extension to non-regular values $\bm{p}$ can be naturally done considering that $deg (\bm{f}, W, \bm{p})$  is locally constant on the connected components of $M \backslash \bm{f}(\partial W)$ \cite{Milnor1965}.

\subsection{Dealing with Uncertainties}
We now consider that the robot's pose can be uncertain, we keep the assumption that the sensor's model is exact. Since the visible set $\mathbb{V}$ depends on the robot's state, uncertainty is naturally propagated to the coverage measure. 

Let $\bm{x}^*$ be the robot's pose representing its position and orientation on the $\mathbb{R}^2$ plane during a mission. From now on, we assume that $\bm{x}^*$ is unknown and that instead, $\bm{x}^*$ belong to a set $[\bm{x}] \in \mathcal{P}(\mathbb{R} \rightarrow \mathbb{R}^3)$ of all the possible functions describing the robot's behavior. Modeling the state of a mobile robot by a set of possible solutions containing the ground truth is a common approach since they are usually nonholonomic systems. These are systems whose behavior can be modeled by differential equations and physical constraints, implying that if bounded uncertainties are introduced, they create a bounded disturbance around the real solution.

\begin{figure}[h]
	\centering
	\includegraphics[scale=1.]{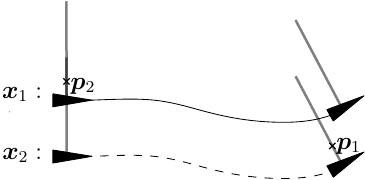}
	\caption{Point $\bm{p}_1$ is explored only if $\bm{x}^* = \bm{x}_2$ and point $\bm{p}_2$ is explored either if $\bm{x}^* =\bm{x}_1$ or $\bm{x}^* =\bm{x}_2$.}
	\label{fig:ex_trajs}
\end{figure}

The coverage measure $c_m(\bm{p})$ for a point $\bm{p} \in M$ can take different values for distinct functions $\bm{x} \in [\bm{x}]$. In this work, we propose a solution for computing the uncertain coverage measure based on interval analysis. For example, let us consider a set $[\bm{x}] = \{ \bm{x}_1,\bm{x}_2\}$ with two possible solutions as illustrated in Figure \ref{fig:ex_trajs}. The coverage measure of $\bm{p}_1$ can either be $0$ or $1$. In this case, we want its coverage measure to be represented by an interval $[0,1]$ containing all the possible solutions. For point $\bm{p}_2$, its coverage measure is always equal to $1$. Therefore, we represent its coverage measure by the singleton $[1,1]$.




We adopt the notation ${c_m}_{|\bm{x}}(\bm{p})$ for representing the coverage measure of a point $\bm{p} \in M$ for a given $\bm{x}$. We are interested in estimating $[c_m](\bm{p}) \in \mathbb{I}\mathbb{Z}$, an interval of relative integers such that
 \begin{equation}
	\forall \bm{x} \in [\bm{x}] \ , \ {c_m}_{|\bm{x}}(\bm{p}) \in [c_m](\bm{p}). 
\end{equation}

From each $\bm{x}  \in [\bm{x}]$, we can generate a different  $\gamma$, a possible sensor's contour for the mission. We define $[\gamma] \in \mathcal{P}( S^1 \rightarrow \mathbb{R}^2)$ as the set of all possible $\gamma$. To simplify the definitions, first we consider a point $\bm{p} \in M $ such that $det(J_{\bm{f}}(\bm{w})) > 0$ for all $\bm{w} \in \bm{f}^{-1}(\bm{p})$. In this case, according to   \eqref{eq:newdef_cm}, we can obtain the coverage measure through the computation of the winding number of the sensor's contour. Therefore, we want to determine $[\overline{\eta}]([\gamma],.) \in \mathbb{I}\mathbb{Z}$ such that 
\begin{equation}
	\forall \gamma \in [\gamma] \ , \  \overline{\eta}(\gamma,.) \in [\overline{\eta}]([\gamma],.).
\end{equation} and we can define the uncertain coverage measure of $\bm{p}$ as 
\begin{equation}
	[c_m](\bm{p})  = [\overline{\eta}]([\gamma],\bm{p}).
\end{equation}
A generalization of the results stated in the remaining of this Section for all the points in the plane can be easily obtained considering a decomposition of cycles $\gamma \in [\gamma]$ in $\gamma^+$ and $\gamma^-$ as proposed in \eqref{eq:last_cm_def}.

\section{Computing the Coverage Measure}
\label{sec:compute}
We are interested in determining the coverage measure of all the points inside an area of interest. Thus, we developed an algorithm, that is presented in this Section, for computing the extended winding number function $\overline{\eta}$ for a cycle $\gamma: S_1 \rightarrow \mathbb{R}^2$ with respect to all the points inside a subset of $\mathbb{R}^2$. We also present its extension for dealing with an uncertain cycle $[\gamma]$. 


\subsection{Computing the Extended Winding Number of $\gamma$ }

Let $\mathbb{W}_i$ be a winding set associated with a cycle $\gamma$, defined for a natural number $i$, by definition \begin{equation}
	\label{eq:winset}
	\mathbb{W}_i := \{ \bm{p} \in \mathbb{R}^2 | \eta(\gamma,\bm{p}) \geq i \}
\end{equation}

\begin{figure}[h]
		\centering
		\includegraphics[scale = 0.9]{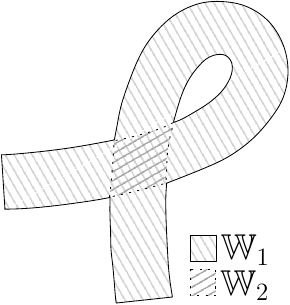}
		\caption{Winding sets $\mathbb{W}_1$ and $\mathbb{W}_2$ associated with the curve $\gamma$ illustrated in Figure \ref{fig:contour}.}
	\label{fig:windingset_ex}
\end{figure}

There are, for example, two non-empty winding sets associated with the curve $\gamma$ of Figure \ref{fig:contour}, $\mathbb{W}_1$ and $\mathbb{W}_2$ represented in Figure \ref{fig:windingset_ex}. As demonstrated in \cite{mcintyre}, the winding number $\eta(\gamma,\bm{p})$ of any point  $\bm{p} \in \mathbb{R}^2 \setminus \gamma$ can be calculated using the winding sets of $\gamma$, \begin{equation}
		\label{eq:mc_theorem}
		\eta(\gamma,\bm{p}) = \sum_{i > 0} \chi_{\mathbb{W}_i}(\bm{p})
	\end{equation} where $\chi_{\mathbb{W}_i}$ is the characteristic function for the winding set $\mathbb{W}_i$. Equations \eqref{eq:winset} and \eqref{eq:mc_theorem} are still valid if $\eta$ is replaced by its extension $\overline{\eta}$. 

 The algorithm starts by computing all the non-empty winding sets $\mathbb{W}_i$, for $i \in \mathbb{N}$, associated with the sensor's contour $\gamma$, through a combinatorial approach.
For that, we consider that a self-intersection or vertex of $\gamma$ is determined by two parameters $t_0 ,t_1 \in S_1$, $t_0 \neq t_1$ and that it is a point $\bm{p}$ such that $\bm{p} = \gamma(t_0) = \gamma(t_1)$. The multiplicity of such a self-intersection is the number, finite or infinite, of distinct $t \in S_1$ such that $\bm{p} = \gamma(t)$ minus one. Then, we make the following assumptions, similar to those of \cite{alexander}, so that the winding number of a point can be easily obtained using \eqref{eq:mc_theorem}:
\begin{itemize}
    \item $\gamma$ has a finite number of self-intersections, each one of them with multiplicity one.
    \item in addition, we assume the two tangent vectors to $\gamma$ at each vertex to be linearly independent. 
\end{itemize} 

\begin{figure}[h]
  \centering
  \subfigure[]{\includegraphics[scale = 1.]{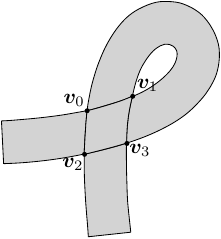}}\quad
  \subfigure[]{\includegraphics[scale=1.]{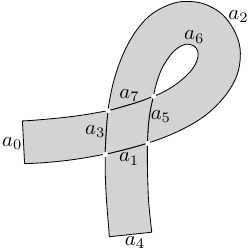}}\quad
  \subfigure[]{\includegraphics[scale=1.]{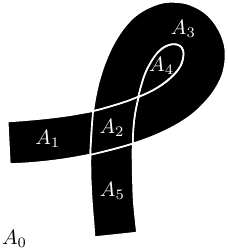}}
  \caption{(a): $CW(\gamma)$ has four 0-cells $\{\bm{v}_0,\bm{v}_1,\bm{v}_2,\bm{v}_3\}$ ; (b): $CW(\gamma)$ has eight 1-cells, connected components of $\gamma \setminus \{\bm{v}_0,\bm{v}_1,\bm{v}_2,\bm{v}_3\}$; (c): The plane is divided into six 2-cells, five compacts (from $A_1$ to $A_5$) and one extending to the infinity ($A_0$).}
	\label{fig:cw}
\end{figure}


	Such a cycle divides  $\mathbb{R}^2 \setminus \gamma$ into a finite number of connected open regions, one of which is not compact. Each one of these regions can be seen as a $2-cell$ of the CW-complex $C(\gamma)$, constructed from the cycle $\gamma$. To be fully formal, we would need to use the fact that $\gamma$ determines a cell decomposition of the one-point compactification of the plane, homeomorphic to the 2-sphere $S_2$, Figure \ref{fig:cw}. The 0-cells of $C(\gamma)$ are self-intersections of $\gamma$, and the 1-cells are parts of the curve separating the 2-cells, connected components of $\gamma$ minus its self-intersections. 

	
	
	

Since all open 2-cells are homotopy equivalent to a point within that cell and considering the degree axioms presented in Definition \ref{def:deg}, we can conclude that all the points within the same open 2-cell of $C(\gamma)$ have the same winding number with respect to $\gamma$. In this context, a correct and coherent numbering of the 2-cells is enough for determining the winding number value of all the points in the plane.

\begin{figure}[h]
  \centering
  \subfigure[]{\includegraphics[scale = 0.8]{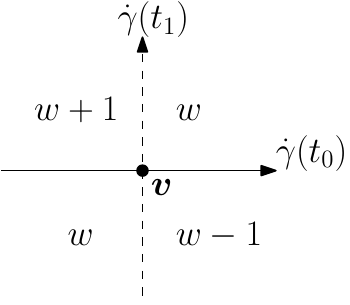}}\quad
  \subfigure[]{\includegraphics[scale=0.8]{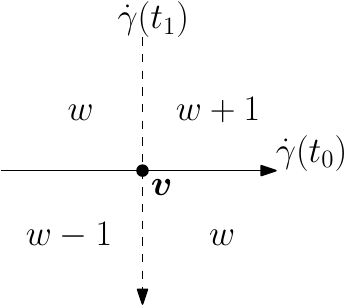}}
  \caption{Alexander numbering with $w \in \mathbb{Z}$: (a): $\dot{\gamma}(t_1)$ crosses $\dot{\gamma}(t_0)$ from right to left; (b): $\dot{\gamma}(t_1)$ crosses $\dot{\gamma}(t_0)$ from left to right.}
	\label{fig:alex1}
\end{figure}


For this purpose, we can use a combinatorial rule proposed by Möbius in 1865 \cite{mobius}. The rule says that two contiguous regions that are separated by a 1-cell are numbered with a value that must differ by exactly 1. The winding number of the region on the left is greater, considering the curve's orientation. This method leads to a unique numbering of the space considering that the winding number in the non-compact region, to whom we will be referring as $A_{0}$, is known and equal to $0$ for all of its points. This is true because since $A_{0}$ is not bounded by $\bm{f}(\partial W)$, differently from the other 2-cells of $C(\gamma)$, we know that $A_{0} \subseteq \mathbb{R}^2 \backslash \bm{f}(W)$. This implies, from Definition \ref{def:wn}, that for any $\bm{p} \in A_{0}$, $\eta(\gamma,\bm{p}) = 0$.

As a direct application of Möbius rules, a method proposed by Alexander \cite{alexander} allows a coherent numbering of the regions only through an analysis of the tangent vectors to the curve on its self-intersections. Let $\bm{v}$ be a vertice of $\gamma$ represented by the pair $(t_0,t_1)$. Considering the assumptions adopted for $\gamma$, a self-intersection $\bm{v}$ will divide the plane into four regions. 
There are only two rules for numbering these four regions, according to whether $\dot{\gamma}(t_1)$ goes from the right to the left or the left to the right with respect to $\dot{\gamma}(t_0)$, as illustrated in Figure \ref{fig:alex1}. 

\begin{figure}[h]
		\centering
		\includegraphics[scale = 0.9]{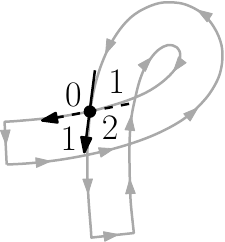}
		\caption{Numbering of regions according to Alexander around $\bm{v}_0$.}
		\label{fig:sub1alex2}
\end{figure}
	\begin{figure}[h]
		\centering
		\includegraphics[scale = 0.9]{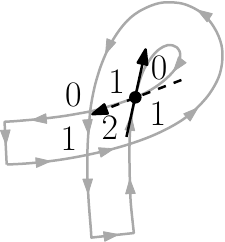}
		\caption{Numbering of regions according to Alexander around $\bm{v}_1$.}
		\label{fig:sub2alex2}
\end{figure}
	\begin{figure}[h]
		\centering
		\includegraphics[scale = 0.9]{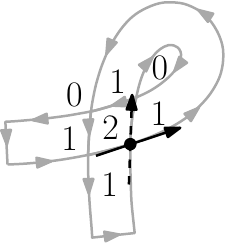}
		\caption{Numbering of regions according to Alexander around $\bm{v}_2$.}
		\label{fig:sub2alex3}
\end{figure}

In Figures \ref{fig:sub1alex2},\ref{fig:sub2alex2} and \ref{fig:sub2alex3} we consecutively apply the Alexander numbering rules to the example considered previously. We start by numbering regions around $\bm{v}_0$, Figure \ref{fig:sub1alex2}. We assume that $A_0$ has a winding number value of $0$ and that the later self-intersection, represented by the dashed line, crosses the previous one from left to the right. The same is done around vertices $\bm{v}_1$ and $\bm{v}_2$ at Figure \ref{fig:sub2alex2} and \ref{fig:sub2alex3}, respectively, resulting in a complete characterization of the plane in terms of winding number values.

Once a numbering is obtained for all the regions according to Alexander's rules, we can construct the winding sets  $W_i$ of $\gamma$, for $i \in \mathbb{N}$, as the closure of the union of the regions with a number greater than or equal to $i$ \cite{mcintyre}. Then, the winding number for a point can be easily computed using \eqref{eq:mc_theorem}.

\subsection{Computing the Extended Winding Number of $[\gamma]$ }
If the sensor's contour $\gamma$ is uncertain, the winding sets associated with the mission will also be uncertain. An uncertain set can be represented as a thick set, the following definition was proposed in \cite{thicksets}. 

\begin{figure}[h]
	\centering
	\includegraphics[scale=0.8]{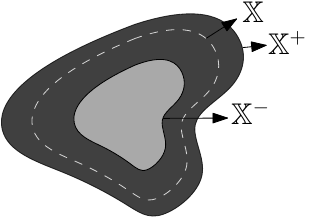}
	\caption{Representation of thick sets.}
	\label{fig:thickset}
\end{figure}
\vspace{0.5cm}
\begin{definition}
	\label{def:thickset}
	We denote $\llbracket \mathbb{X} \rrbracket \in \mathbb{I}\mathcal{P}(\mathbb{R}^n)$ a thick set of $\mathbb{R}^n$ if there are two subsets of $\mathbb{R}^n$ called the lower bound $\mathbb{X}^-$ and the upper bound $\mathbb{X}^+$ such that \begin{equation}
		\begin{aligned}
			\llbracket \mathbb{X} \rrbracket & = [\mathbb{X}^-,\mathbb{X}^+] \\
			& = \{\mathbb{X} \in \mathcal{P}(\mathbb{R}^n) \ | \ \mathbb{X}^- \subseteq \mathbb{X} \subseteq \mathbb{X}^+ \}
		\end{aligned}
	\end{equation}
	A thickset partitions the environment into three zones, the clear zone $\mathbb{X}^-$, the penumbra $\mathbb{X}^+ \backslash \mathbb{X}^-$ (both illustrated in Figure \ref{fig:thickset}) and the dark zone $\mathbb{R}^n \backslash \mathbb{X}^+$.
\end{definition}
Let $\mathbb{W}_i^{\gamma}$, with $i \in \mathbb{N}$, be a winding set associated with a cycle $\gamma$. To the set $[\gamma]$ of all the possible sensor's contour we associate $\llbracket \mathbb{W}_i \rrbracket = [\mathbb{W}^-_i,\mathbb{W}^+_i]$, such that,

\begin{equation}
	\mathbb{W}^-_i = \bigcap_{\gamma \in [\gamma]} {\mathbb{W}^{\gamma}_i}
\end{equation}
\begin{equation}
	\mathbb{W}^+_i = \bigcup_{\gamma \in [\gamma]} {\mathbb{W}^{\gamma}_i}
\end{equation}

\begin{figure}[h]
  \centering
  \subfigure[]{\includegraphics[scale=0.7]{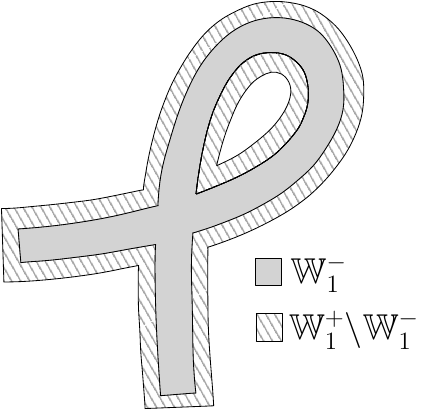}}\quad\hspace{0.8cm}
  \subfigure[]{\includegraphics[scale=0.7]{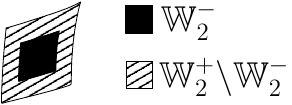}}
  \caption{(a):$\llbracket \mathbb{W}_1 \rrbracket$ ; (b): $\llbracket \mathbb{W}_2 \rrbracket$ .}
	\label{fig:gamma_tube_1}
\end{figure}

In the exploration context, the clear zone of $\llbracket \mathbb{W}_i \rrbracket$, represented by $\mathbb{W}^-_i$, translates as a set of points that were certainly explored at least $i$ times. Analogously, the dark zone $\mathbb{R}^2 \backslash \mathbb{W}^+_i$ is a set of points that have a coverage measure smaller than $i$, independently of which of the functions in $[\bm{x}]$ is the ground truth. The penumbra $\mathbb{W}^+_i \backslash \mathbb{W}^-_i$ is a set of points whose coverage measure is equal to $i$ for some $\gamma \in [\gamma]$. 

\begin{figure}[h]
		\centering
		\includegraphics[scale = 0.8]{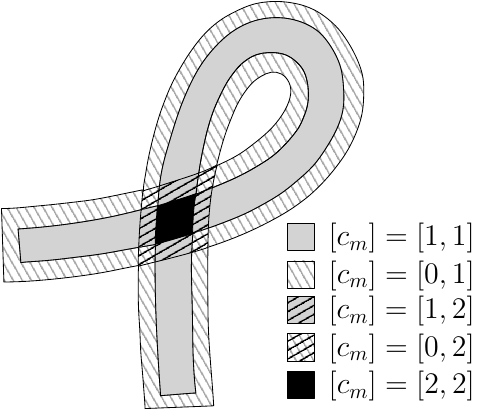}
	   \caption{Coverage measure considering the uncertain winding sets associated with $[\gamma]$.}
	   \label{fig:gamma_tube_2}
\end{figure}

We redefine the characteristic function to deal with thick sets on the plane, we have $[\chi]:\mathbb{R}^2 \rightarrow \mathbb{I}\mathbb{N}_0$ and \begin{equation}
	\label{eq:chi_thick}
	[\chi]_{\llbracket \mathbb{W}_i \rrbracket}(\bm{p}) = \begin{cases}
		[1,1], & \text{if}\ \bm{p} \in \mathbb{W}_i^-,\\
		[0,1], & \text{if}\ \bm{p} \in \mathbb{W}_i^+ \backslash \mathbb{W}_i^-,\\
		[0,0], & \text{otherwise}
	\end{cases}
\end{equation} Then, we have \begin{equation}
	\label{eq:new_etacalc}
	[\overline{\eta}]([\gamma],\bm{p}) = \sum_{i > 0} \chi_{\llbracket \mathbb{W}_i \rrbracket}(\bm{p})
\end{equation} In Figure \ref{fig:gamma_tube_1} we have an illustration of thick sets $\llbracket \mathbb{W}_1 \rrbracket$ and $\llbracket \mathbb{W}_2 \rrbracket$ for the example considered through out this paper and in Figure \ref{fig:gamma_tube_2} the resultant coverage measure considering these sets.

This defines the notion of uncertain winding number (and uncertain coverage measure). Under some assumptions, given below, that are realistic for applications, we need only a slightly generalized Alexander rule to efficiently compute the uncertain coverage measure. 

As in \cite{Rohou}, we will suppose that $[\bm{x}]$ is given by two time-varying sets: an outer approximation of the set of the robot's pose, $[\bm{s}](t)$, at time $t$, in the plane, and $[\bm{v}](t)$, an outer-approximation of the set of linear velocities of the robot, at time $t$, in the plane. Hence:
$$\begin{array}{lrcl}
[\bm{s}]: & \R & \rightarrow & \R^2 \\
{[} \bm{v} {]}: & \R & \rightarrow & \R^2
\end{array}$$

Consider the following notion of uncertain self-intersection. These are points $\bm{p}$ in the plane such that $\bm{p} \in [\bm{s}](t_1) \cap [\bm{s}](t_2)$ for some $t_1 < t_2$. The set of pairs of such times $t_1$, $t_2$, for a given $\bm{p}$, is denoted by $T_x$. Supposing that for all $\bm{p}$ uncertain self-intersection, for all $(t_1,t_2)\in T_x$, for all $v_1 \in [\bm{v}](t_1)$, $v_2 \in [\bm{v}](t_2)$, $v_1$ is not colinear with $v_2$ (or $v_1$ and $v_2$ are transverse to each other), we get the following uncertain Alexander rules: 


\begin{figure}[h]
  \centering
  \subfigure[]{\includegraphics[scale = 0.9]{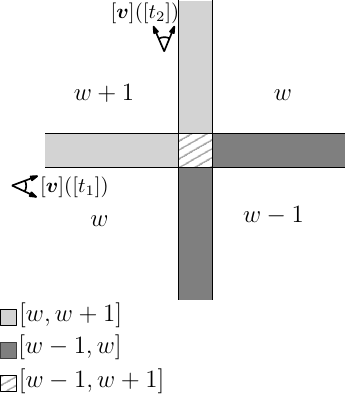}}\quad\hspace{0.8cm}
  \subfigure[]{\includegraphics[scale = 0.9]{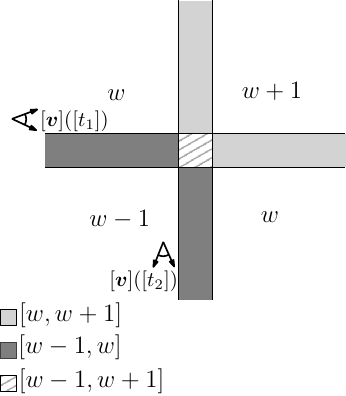}}
  \caption{Uncertain Alexander numbering with $w \in \mathbb{Z}$: (a): $[\bm{v}](t_2)$ comes from the right; (b): $[\bm{v}](t_2)$ comes from the left.} 
	\label{fig:alex4}
\end{figure}

\begin{subsection}{Implementation}
    The method above was numerically implemented using the Codac library \cite{codac}.\footnote{The code is available on GitHub  \href{https://github.com/marialuizacvianna/extended_winding}{github.com/marialuizacvianna/extended\_winding} .} We consider that we have on the input of the algorithm a  well defined function or a tube describing the robot's pose $\bm{x}$, speed $\dot{\bm{x}}$ and acceleration $\ddot{\bm{x}}$. From these inputs, the sensor's contour $\gamma$ is obtained through a concatenation of $\bm{x} = \bm{f}(0,[0,T])$ with $\bm{x}_{aux1} = \bm{f}([0,L],T)$, $\bm{x}_R = \bm{f}(L,[0,T])$ and $\bm{x}_{aux2} = \bm{f}([0,L],0)$, as illustrated in Figure \ref{fig:wf_mosaic_onesonar_left} and we have $$
        \gamma = \bm{x} * \bm{x}_{aux1} * \bm{x}^{-1}_R * \bm{x}^{-1}_{aux2}
    $$ where $\bm{x}^{-1}_R(t) = \bm{x}_R(T-t)$ and $\bm{x}^{-1}_{aux2}(t) = \bm{x}_{aux2}(T-t)$. We parameterize $\gamma$ with $\tau \in [0,1]$ that is not a time representation. The speed vector along $\gamma$ can be computed using $\dot{\bm{x}}$ and $\ddot{\bm{x}}$.

    The next step in the algorithm is to compute the set of time pairs $\mathbb{T}$ that represent the self-intersections of $\gamma$.$$
        \mathbb{T} = \{ (\tau_1,\tau_2) \in [0,1]^2 | \tau_1 < \tau_2 \text{ and } \gamma(\tau_1) = \gamma(\tau_2) \}
    $$ This set can be obtained with the algorithm presented in \cite{aubry} available in \cite{codac}. For the  example considered throughout this paper, first presented in Figure \ref{fig:line_sweep_ae}, we obtain the following set of self-intersections $$ \mathbb{T} = \{ (\tau_1,\tau_4), (\tau_2,\tau_5), (\tau_6,\tau_7), (\tau_3,\tau_8) \} $$ where $0 \leq \tau_1 <  \tau_2 < \hdots < \tau_8  \leq 1$. These pairs correspond to the vertices illustrated in Figure \ref{fig:cw}:  $\bm{v}_0 = \gamma(\tau_3) = \gamma(\tau_8)$, $\bm{v}_1 = \gamma(\tau_6) = \gamma(\tau_7)$, $\bm{v}_2 = \gamma(\tau_0) = \gamma(\tau_1)$ and $\bm{v}_3 = \gamma(\tau_2) = \gamma(\tau_5)$. Then, the set of 1-cells of $\gamma$ can be defined as $$\mathbb{E} =  \{ a_0, a_1, a_2, a_3, a_4, a_5, a_6, a_7 \}$$ where $\partial a_i = \gamma(\tau_{i+1}) - \gamma(\tau_i)$, for $i = 1, \hdots \#\mathbb{E} - 1$ and $\partial a_0 = \gamma(\tau_{1}) - \gamma(\tau_{\#\mathbb{E}})$.

    Determining if a vector $\bm{a}$ crosses another vector $\bm{b}$ from the right to the left can be mathematically translated by the cross product $\bm{a} \times \bm{b}$ being positive. In this case, to each of the vertices represented by a pair $(\tau_i,\tau_j) \in \mathbb{T}$ we associate an update value $u \in \{ -1, +1\}$ that determines if $\dot{\gamma_j}$ crosses $\dot{\partial(U_i)}$ from the right to the left $u = -1$ or the left to the right $u = +1$. 

 We use the update value of each edge's initial vertex and the combinatorial method presented in this Section for defining a winding number value for the area on its right and left sides. Finally, the winding sets can be easily obtained knowing that $\partial \mathbb{W}_i$ is a concatenation of the edges in $\mathbb{E}$ for which the value on the area on its left side is equal or greater than $i$.  

 We choose to represent sets using interval arithmetic and we rely on interval analysis tools \cite{Moore}, such as separators and a Set Inversion Via Interval Analysis (SIVIA) algorithm \cite{sivia}, for classifying, in terms of their coverage measure, all the points inside an area of interest. The set inversion algorithm bisects the environment, up to a precision that is chosen by the user, such that the plane is divided into boxes that do not intersect $\gamma^+$ and $\gamma^-$. The advantage of this method is that it is known, from the properties of the topological degree, that all the points that belong to a set in the plane that does not intersect the considered cycles will have the same winding number value. Therefore, this method limits the number of computations that have to be done to determine the winding number for all the points inside an area. For boxes $[\bm{b}] \in \mathbb{I}\mathbb{R}^2$ for which  $[\bm{b}] \cap \gamma^+ \neq \emptyset$ or $[\bm{b}] \cap \gamma^- \neq \emptyset$ is true, an uncertain winding number value will be computed. For that, we use the following adaptation of the characteristic function for thick sets to deal with sets of $\mathbb{R}^2$ on the input: 
$[\chi]:\mathcal{P}(\mathbb{R}^2) \rightarrow \mathbb{I}\mathbb{N}_0$, \begin{equation}
	\label{eq:chi_thick}
	[\chi]_{\llbracket \mathbb{W}_i \rrbracket}([\bm{b}]) = \begin{cases}
		[1,1], & \text{if for all}\ \bm{p} \in [\bm{b}] \text{ , } \bm{p} \in \mathbb{W}_i^-,\\
		[0,1], & \text{if}\ \exists \ \bm{p} \in [\bm{b}] \text{ , } \bm{p} \in \mathbb{W}_i^+ \backslash \mathbb{W}_i^-,\\
		[0,0], & \text{otherwise}
	\end{cases}
\end{equation}


\end{subsection} 



%

\section{Experiments}
\label{sec:exp_validation}
\begin{figure}[h]
    \centering
    \includegraphics[scale = 0.25]{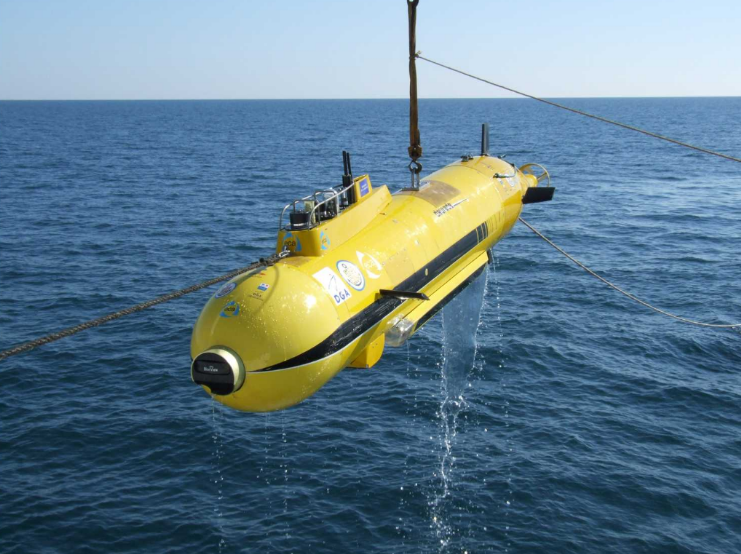}
    \caption{The AUV Daurade.}
    \label{fig:9}
\end{figure}

We apply the method presented in this paper on a dataset acquired during a mission performed by the AUV daurade, Figure \ref{fig:9}, on November 2015. This robot was built by ECA robotics and used by Direction Général de l'Armement - Techniques Navales (DGA - TN) and by the Service Hydrographique et Océanogrpahique de la Marine (SHOM). The mission took place in the Road-Sted of Brest (Britanny, France), it consists of a 45 minutes survey path.

Daurade explores using two side-scan sonars, one that explores its right side and the other its left side. The visible area of both sensors can be individually modeled as a line-sweep sensor on the plane. Assuming a configuration in which there is no visibility gap and no overlap between the range of visibility of the two sensors, the whole can be represented as a line-sweep sensor. 

\begin{figure}[h]
  \centering
  \includegraphics[ trim={2cm 2cm 0 0}, clip, scale = 0.35]{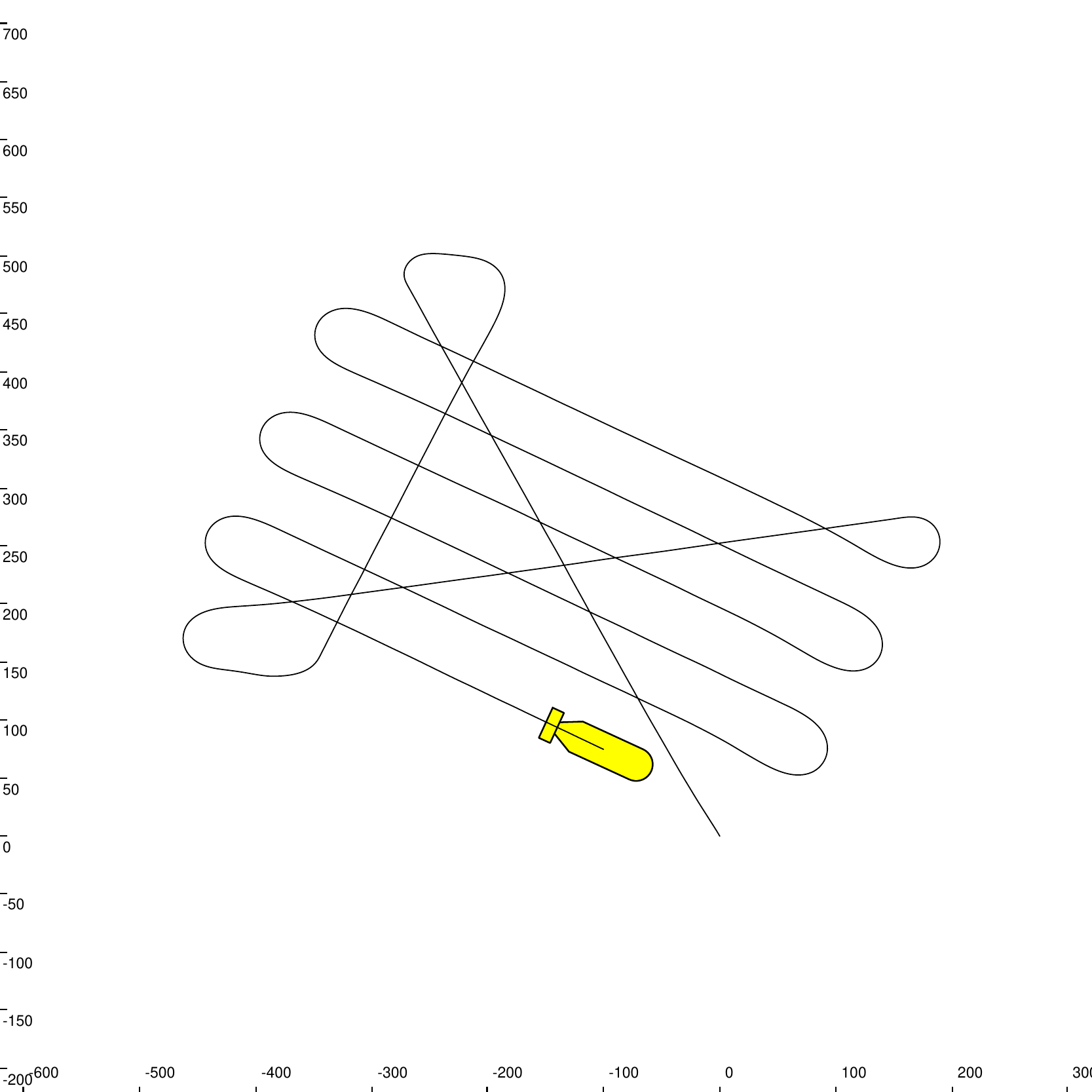}
    \caption{Estimated robot's trajectory $\tilde{\bm{x}}$ without incertitude. The robot is represented at its final pose at the end of the mission.}
  \label{fig:10sub1}
\end{figure}
\begin{figure}[h]
  \centering
  \includegraphics[trim={2cm 2cm 0 0}, clip, scale = 0.35]{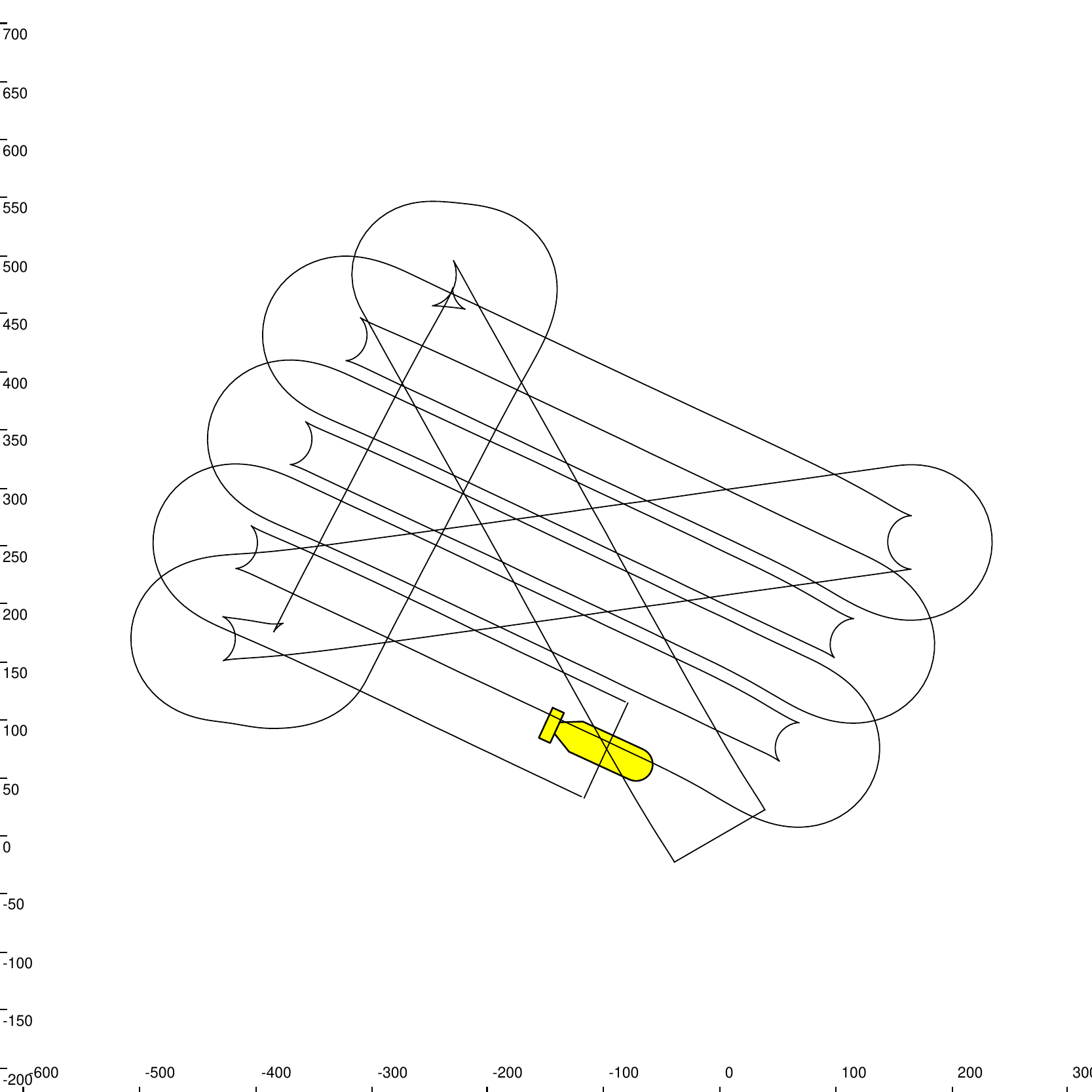}
  \caption{The sensor's contour $\tilde{\gamma}$ for the mission.}
  \label{fig:10sub2}
\end{figure}

\begin{figure}[h]
  \centering
  \subfigure[]{\includegraphics[ trim={2cm 2cm 0 0}, clip, scale = 0.35]{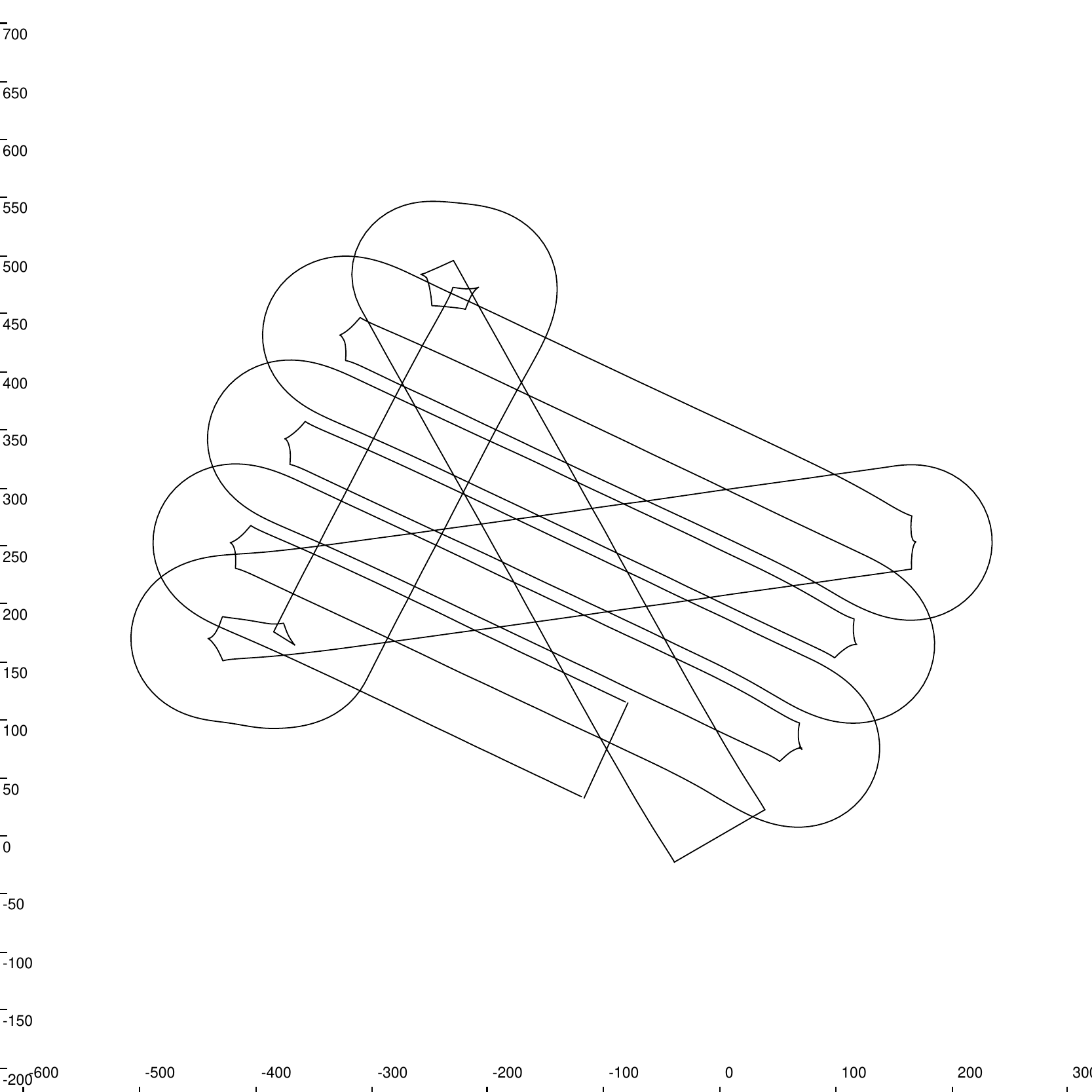}}\quad
  \subfigure[]{\includegraphics[trim={2cm 2cm 0 0}, clip, scale = 0.35]{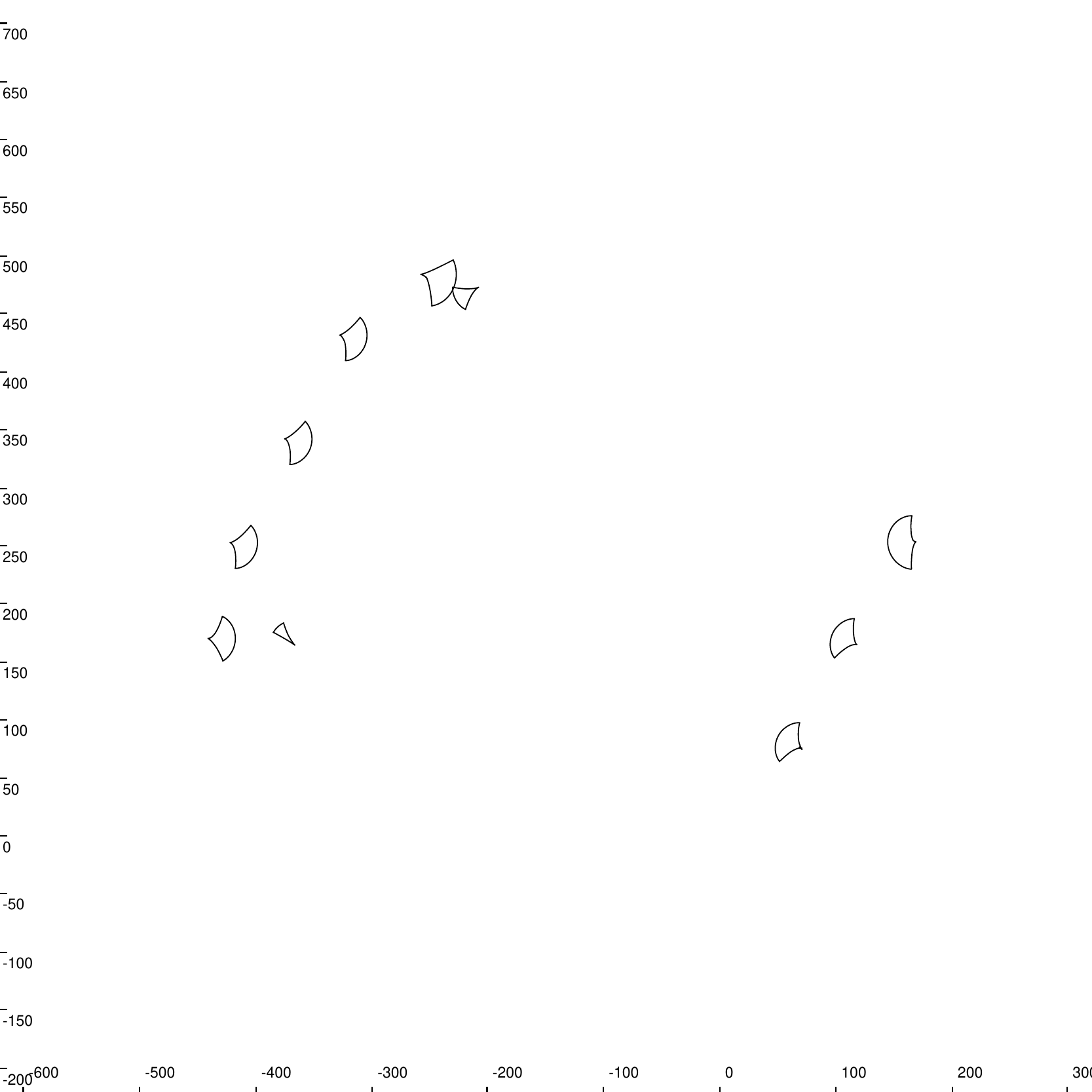}}
  \caption{(a): $\gamma^+$; (b): $\gamma^-$.}
    \label{fig:11}
\end{figure}

The robot's pose underwater is estimated by the integration of data acquired by an Inertial Measurement Unit (IMU) coupled with a Doppler Velocity Logger (DVL) and a pressure sensor, for depth estimation. Initially, we assume that this estimation $\tilde{\bm{x}}$ is exact, as illustrated in Figure \ref{fig:10sub1}, and that the robot maintains a constant depth during the mission, resulting in the sensor's contour $\tilde{\gamma}$ presented in Figure \ref{fig:10sub2}. Figure \ref{fig:11} displays the separation of $\tilde{\gamma}$ into $\tilde{\gamma}^+$ and  $\tilde{\gamma}^-$. 

\begin{figure}[h]
\centering
  \includegraphics[scale = 0.5]{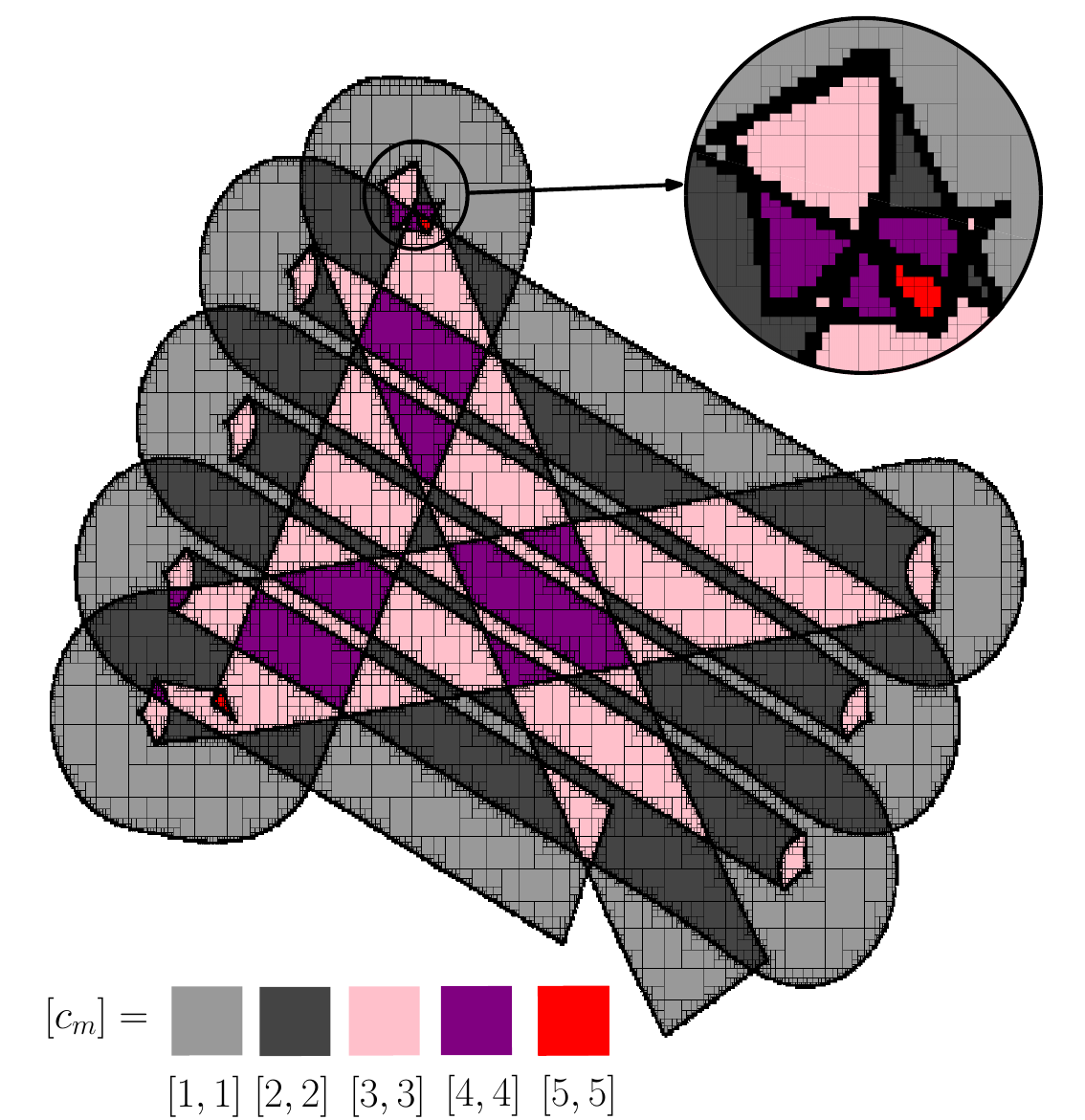}
\caption{Result of the SIVIA algorithm for the classification of the explored area. Boxes in black have an uncertain coverage measure value.}
\label{fig:12}
\end{figure}


\begin{figure}[h]
\centering
\includegraphics[scale = 0.4]{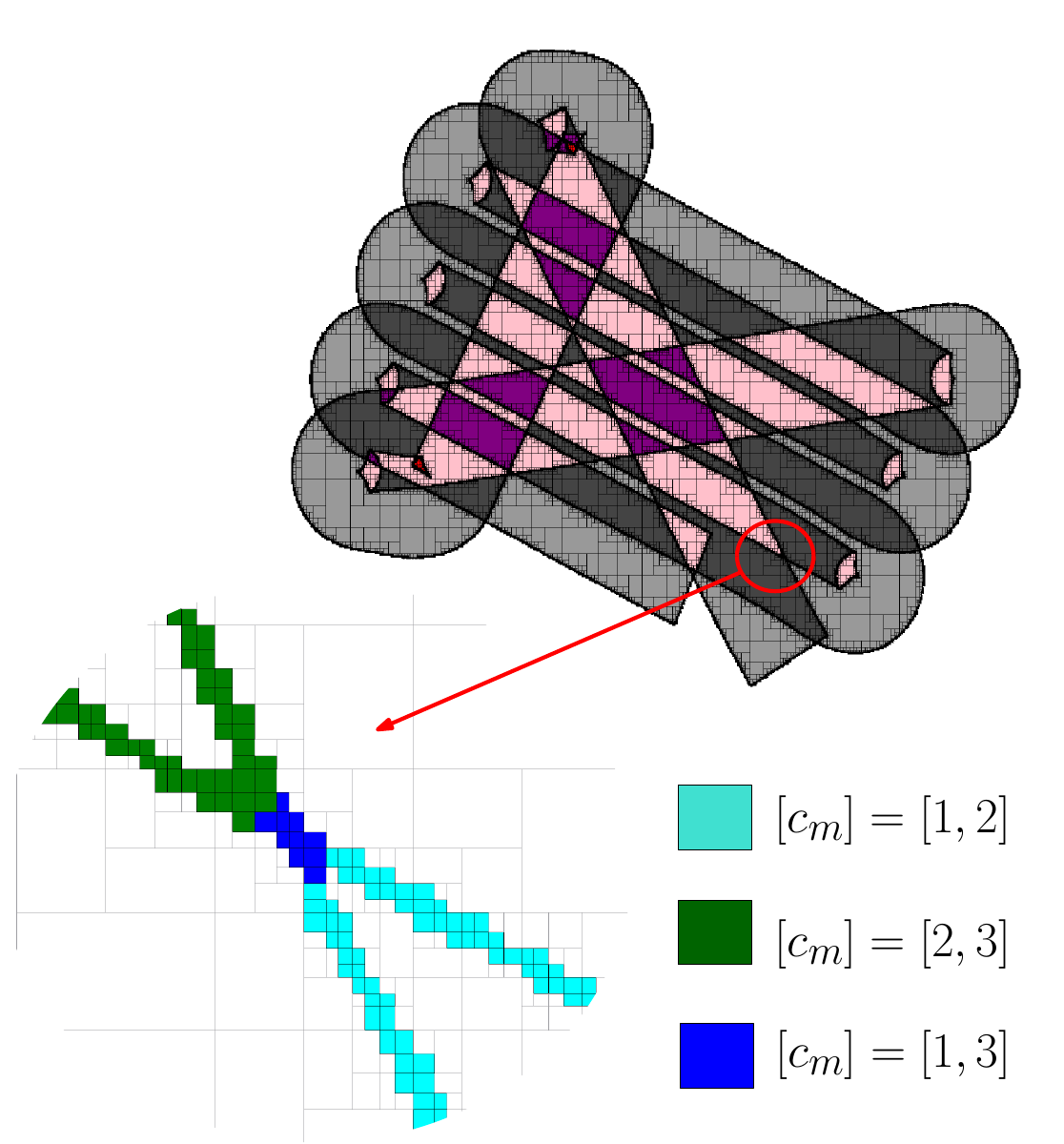}
\caption{Coverage measure for boxes that intersect the sensor's contour.}
\label{fig:13}
\end{figure}

The characterization of the explored area is done by calculating winding numbers $\eta(\tilde{\gamma}^+,\bm{p})$ and $\eta(\tilde{\gamma}^-,\bm{p})$ for all $\bm{p}$  inside the area considered of interest. The algorithm proposed in Section \ref{sec:compute} is used for this purpose. In Figure \ref{fig:12} we can see the resultant paving. Uncertain boxes, surrounding contours $\tilde{\gamma}^+$ and $\tilde{\gamma}^-$ are represented in black. The uncertain winding number value for each of these boxes can also be defined with the proposed algorithm, in Figure \ref{fig:13}, we give an overview of the classification of these boxes for a part of the mission. 

\begin{figure}[h]
  \centering
  \includegraphics[ trim={0.9cm 0.8cm 0 0}, clip, scale = 0.35]{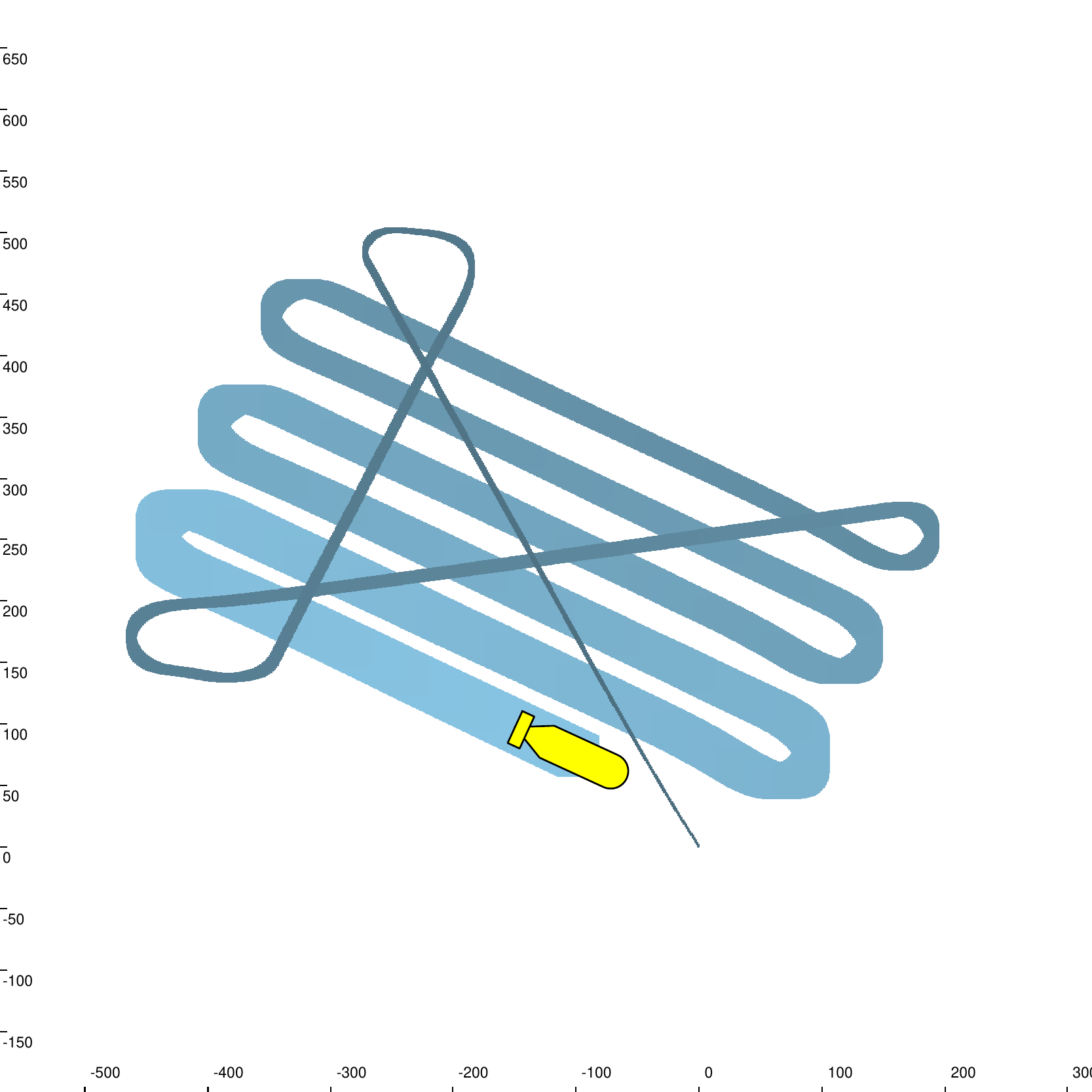}
    \caption{The inclusion function $[\bm{x}]$.}
  \label{fig:14sub1}
\end{figure}
\begin{figure}[h]
  \centering
  \includegraphics[ trim={0.9cm 0.8cm 0 0}, clip, scale = 0.35]{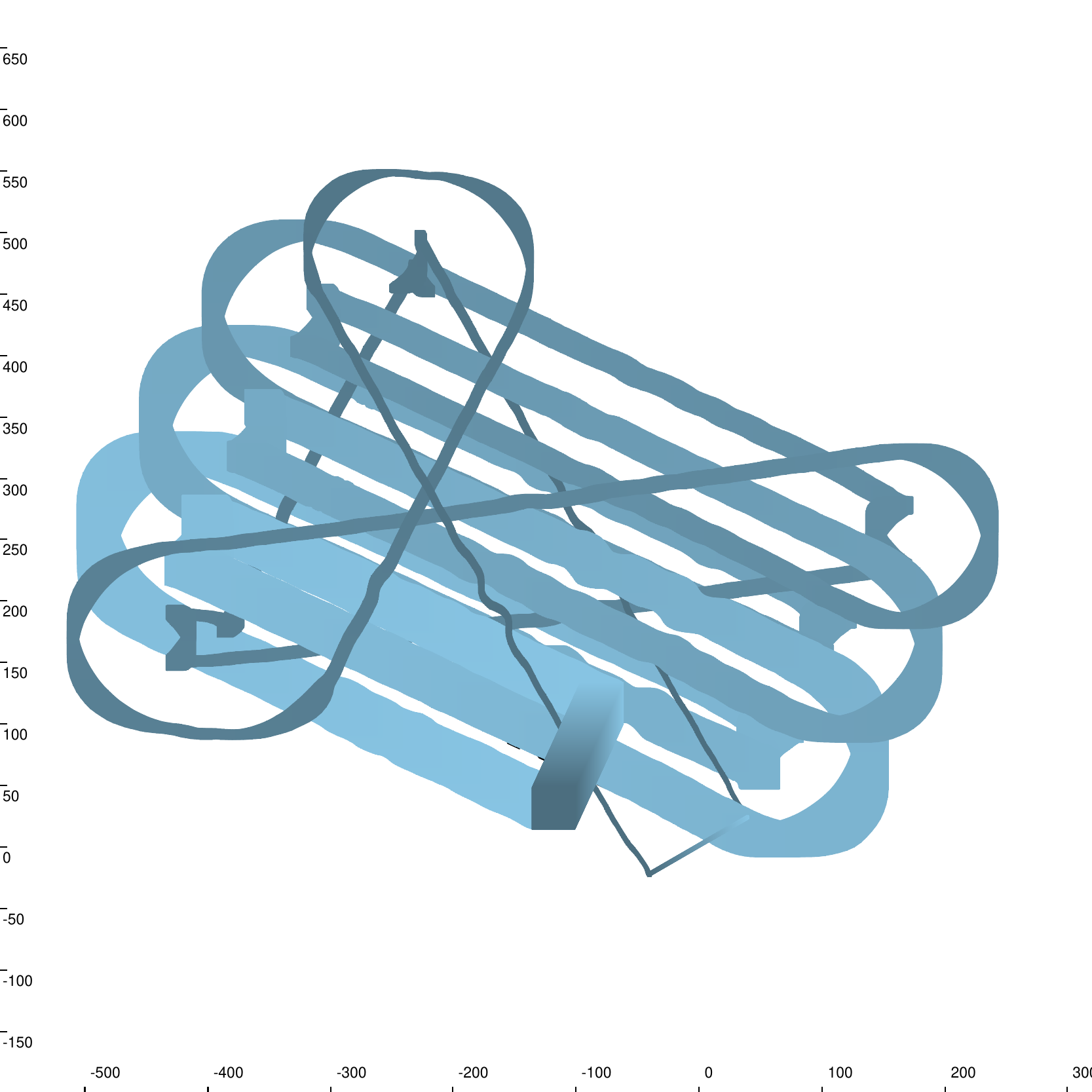}
  \caption{$[\gamma]$.}
  \label{fig:14sub2}

\end{figure}

Then, if we take into consideration the incertitude around sensors measurements, propagated through integration during pose estimation, we obtain $[\bm{x}]$, Figure \ref{fig:14sub1}. We represent the uncertain pose by a guaranteed envelope of the ground truth $\bm{x}^*$ using a box-valued function named tube on the interval analysis literature. The sonar's contour $[\gamma]$ will also be uncertain and represented by a tube, as displayed in Figure \ref{fig:14sub2}.

\begin{figure}[h]
\centering
  \includegraphics[scale = 0.5]{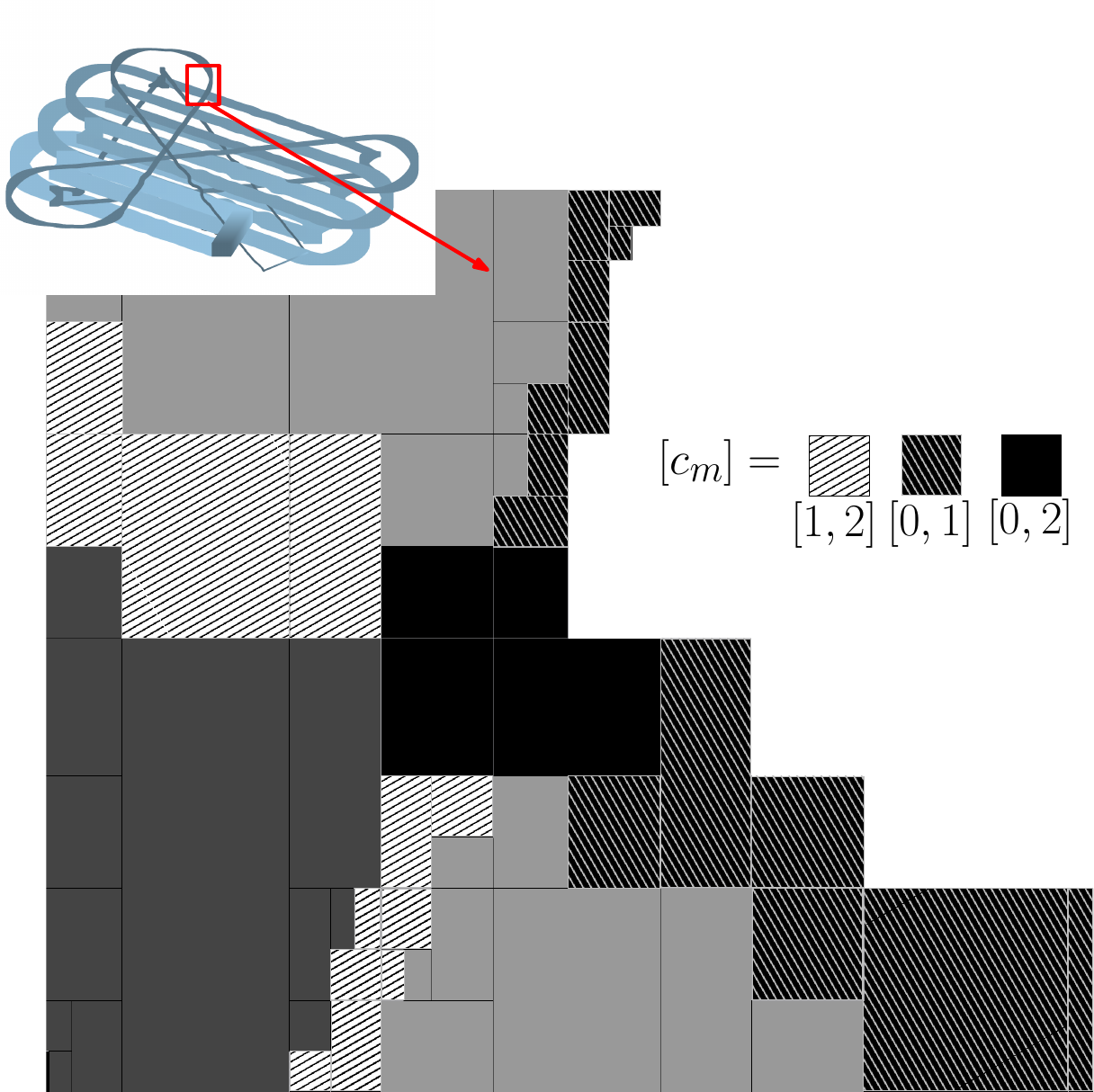}
\caption{Application of the uncertain Alexander Rule for one self-intersection that respects the conditions established by the presented method. Boxes in light gray are guaranteed to have been explored once and in dark gray twice. Other boxes have an uncertain coverage measure.}
\label{fig:15}
\end{figure}

In the considered scenario, some self-intersections of $[\gamma]$  do not respect the conditions established by our algorithm, notably, the non colinearity condition that ensures that the environment is divided into four regions around the self-intersection so the Alexander rules can be applied for numbering. As a result, the problem at hand cannot be directly solved using the proposed method. We apply, however, our algorithm around one uncertain self-intersection in $[\gamma]$ that respects our limitation in order to exemplify the extension of the Alexander algorithm to uncertain curves, as it was presented in Figure \ref{fig:alex4}. The result is illustrated in Figure \ref{fig:15}. One can note that the method presented in this paper can still be used to characterize the whole environment in this situation. For that, the mission must be divided into multiple missions, along the time of exploration, that respect individually the required constraints. 

\section{Conclusion}

In conclusion, this article has extended the link between the topological degree and the line-sweep exploration problem, allowing for a characterization of the area explored by a mobile robot in a two-dimensional plane. An interval analysis-based algorithm for computing the winding number for all the points inside a set has also been proposed, and its efficiency and scalability make it suitable for deployment on resource-constrained robotic platforms. A real-world experiment has shown that the proposed algorithm consistently produces reliable characterizations of the explored area, but it has also shown the limitations of the method that should be addressed by future work. Other future research directions may involve extending the algorithm to three-dimensional environments and exploration sensors with a two-dimensional visible area. Furthermore, the algorithm's applicability in collaborative multi-robot systems and its integration with simultaneous localization and mapping (SLAM) techniques could be explored. For the latter, we could imagine a scenario where the coverage measure is used to reduce the exteroceptive data that has to be compared to find possible feature matching, therefore, reducing the complexity of SLAM algorithms. 
Finally, we will examine the link between uncertain topological degrees and methods based on persistent homology, as in e.g. \cite{ghristcoveragepersistence}.

\section*{Acknowledgments}
We acknowledge the support of the "Engineering of Complex Industrial Systems" Chair Ecole Polytechnique-ENSTA Paris-T\'el\'ecom Paris, partially funded by DGA/AID, Naval Group, Thal\`es and Dassault Aviation.


\bibliographystyle{IEEEtran}
\bibliography{bib_1dim}

\vfill

\end{document}